\title{Geometric Rates of Convergence for Kernel-based Sampling Algorithms}
\author{%
  Rajiv Khanna\\
  Department of Statistics\\
  University of California, Berkeley\\
  \texttt{rajivak@berkeley.edu}
  \and
    Liam Hodgkinson \\
  ICSI and Department of Statistics\\
  University of California, Berkeley\\
  \texttt{liam.hodgkinson@berkeley.edu}
  \and
  Michael W. Mahoney\\
  ICSI and Department of Statistics\\
  University of California, Berkeley\\
  \texttt{mmahoney@stat.berkeley.edu}
 }
\newcommand{\wkh}{\text{WKH}}
\newcommand{\bigO}{\mathcal{O}}
\mathchardef\mhyphen="2D
\DeclareMathOperator*{\argmax}{arg\,max}
\DeclareMathOperator*{\argmin}{arg\,min}
\newcommand{\vertiii}[1]{{\left\vert\kern-0.25ex\left\vert\kern-0.25ex\left\vert #1
    \right\vert\kern-0.25ex\right\vert\kern-0.25ex\right\vert}}
\def\bff{{\mathbf{f}}}
\def\bk{{\mathbf{k}}}
\def\bu{{\mathbf{u}}}
\def\bw{{\mathbf{w}}}
\def\bx{{\mathbf{x}}}
\def\by{{\mathbf{y}}}
\def\bz{{\mathbf{z}}}
\def\bK{{\mathbf{K}}}
\def\bT{{\mathbf{T}}}
\def\bX{{\mathbf{X}}}
\def\0{{\mathbf{0}}}
\def\bbE{{\mathbb{E}}}
\def\bbP{{\mathbb{P}}}
\def\bbR{{\mathbb{R}}}
\def\cI{\mathcal{I}}
\def\cP{\mathcal{P}}
\def\cX{\mathcal{X}}
\def\cY{\mathcal{Y}}
\def\sfG{\mathsf{G}}
\def\sfS{\mathsf{S}}
\def\sfT{\mathsf{T}}
\newtheorem{theorem}{Theorem}
\newtheorem{assumption}{Assumption}
\newtheorem{lemma}[theorem]{Lemma}
\renewcommand{\text}[1]{{\textnormal{#1}}}
\begin{document}
\maketitle

\begin{abstract}
The rate of convergence of weighted kernel herding (WKH) and sequential Bayesian quadrature (SBQ), two kernel-based sampling algorithms for estimating integrals with respect to some target probability measure, is investigated. Under verifiable conditions on the chosen kernel and target measure, we establish a near-geometric rate of convergence for target measures that are nearly atomic. Furthermore, we show these algorithms perform comparably to the theoretical best possible sampling algorithm under the maximum mean discrepancy. An analysis is also conducted in a distributed setting. Our theoretical developments are supported by empirical observations on simulated data as well as a real world application.  
\end{abstract}

\section{Introduction} \label{sec:intro}

Estimating expectations is a common problem fundamental to many applications in statistics and machine learning, such as the estimation of sufficient statistics, prediction after marginalization of latent variables, and calculation of risk.
The goal is the computation of integrals of the form 
\begin{equation}
\bbE_\pi f(\bx) = \int_\cX f(\bx)d\pi(\bx) ,
\label{eq:expectation}
\end{equation}
where $f$ is a given function $f$ and $\pi$ a probability measure over $\mathbb{R}^d$. In the vast majority of cases, the integral is not analytically computable, necessitating numerical approximation. If $\pi$ is absolutely continuous with density $p$, classical Gaussian quadrature techniques achieve exponential rates of convergence, but these are known to suffer from the curse of dimensionality.
Instead, one often adopts probabilistic techniques. The simplest of these is crude Monte Carlo (MC) integration: generating independent samples $\{\bx_i\}_{i=1}^m$ from $\pi$, and returning the empirical average of $\{f(\bx_i)\}_{i=1}^m$. This method converges at a prohibitively slow rate of $\bigO(m^{-1/2})$ in $L^2$. If one cannot easily generate independent samples from $\pi$, Markov Chain Monte Carlo (MCMC) is often used instead.  This method generates approximate samples from $\pi$ as iterates of an ergodic Markov chain. Under certain assumptions on the chain, this approach will also converge at rate $\bigO(m^{-1/2})$. 

To achieve ``super-root-$m$'' rates of convergence, alternative non-random sampling techniques have been proposed. Quasi Monte Carlo is a classic example with dimension-dependent $\bigO(m^{-1}\log m)$ convergence. 
More recently, the herding algorithm~\citep{Welling2009a,Welling2009b,Welling2010} was proposed to learn Markov Random Fields (MRFs). First applicable to discrete finite-dimensional spaces, it was later extended to continuous spaces and infinite dimensions through the kernel trick by~\citet{Chen2010SuperSamples}, who called the resulting algorithm \emph{Kernel Herding} (KH). A general convergence rate of $\bigO(m^{-1/2})$ was also provided, showing that KH performs, asymptotically, at least as well as crude Monte Carlo methods. However, in practice, KH typically exhibits faster convergence, suggesting these rates can be improved. 

\citet{Chen2010SuperSamples} also suggested a \emph{moment matching} interpretation of the algorithm: kernel herding is equivalent to choosing 
samples $\{\bx_i\}_{i=1}^m$ 
that minimize 
the Maximum Mean Discrepancy (MMD) metric (see~\citet{Chen2010SuperSamples,Huszar2012OptimallyWeightedHI}) between $\pi$ and the empirical measure of $\{\bx_i\}_{i=1}^m$. In an alternative Bayesian approach,~\citet{Ohagan1991} and~\citet{Ghahramani2002BayesianMC} assume a GP prior on $f$. Each sample $\bx_i$ is generated by minimizing the mean squared error of the posterior of $f$, conditioned on the points $\{\bx_j\}_{j<i}$ already sampled. 
This was shown to be equivalent to KH (with kernel dictated by the covariance of the GP), but with an additional step of also minimizing the weights attached to the sampled points (that is, instead of using uniform weights of $\nicefrac{1}{m}$~\citep{Huszar2012OptimallyWeightedHI}). 
Empirically, it was observed that this new algorithm---called \emph{Sequential Bayesian Quadrature} (SBQ)---converges faster than KH due to the additional weight optimization step. 

A partial explanation for faster convergence  was given by~\citet{Bach2012Herding}. While KH and SBQ both choose the next sample point to minimize MMD, SBQ also optimizes for the weights. 
Alternatively, once a sample point is selected, the weights themselves can be optimized. We refer to this procedure as \emph{Weighted Kernel Herding} (WKH). For the case when the weights are constrained to lie in the unit simplex,~\citet{Bach2012Herding} noted that WKH is equivalent to the classic algorithm of \citet{Frank1956bt} on the marginal polytope. 
Exploiting this connection, they were able to use existing convergence results to analyze constrained WKH. Specifically, if the optimum lies in the relative interior of the marginal polytope with distance $b$ away from the boundary, the convergence rate is $\bigO(e^{-b m})$. 
For infinite dimensional kernels, $b = 0$ (i.e., exponential convergence does not hold). On the other hand, $b > 0$ for finite dimensional kernels, but it could be so close to zero that the global $\bigO(m^{-1/2})$ bound proves tighter.
\citet{Bach2012Herding} also point out these issues, suggesting that another approach is required to fully justify the improved empirical performance of WKH. 

Here, we attempt to address this deficiency by providing an analysis for near-exponential convergence of unconstrained WKH and SBQ. As noted by~\citet{Huszar2012OptimallyWeightedHI}, the weights in WKH do not lie on the unit simplex in many applications, and so the correspondence to Frank--Wolfe does not hold. 
Instead, 
we study the convergence behavior of WKH and SBQ with respect to the \emph{best possible algorithm} under MMD for generating $m$ samples from the target measure --- a feat that is almost certainly unachievable in practice. 
Our analysis effectively says that WKH and SBQ are ``good enough,'' in the sense that one only requires to pick a few more atoms using WKH or SBQ than any other possible algorithm to get close to the performance of the latter. This result is encapsulated in Theorem~\ref{THM:APPROX}.
Within this understanding, we find that the relevant assumption for investigating convergence is \emph{realizability}: that the mean embedding in the kernel space can be exactly reproduced by a linear combination of $r$ samples $\{\bx_i\}_{i=1}^r$. This amounts to assuming that $\pi$ is comprised of finitely many atoms --- for any distribution that can be closely approximated by such a measure, this may well be reasonable to consider. In this case, we show that realizability guarantees $\bigO(e^{-r m})$ convergence.
\subsection{Contributions}
Our contributions in this work include: 
\begin{itemize}
	\item An analysis of two algorithms for approximating expectations --- WKH and SBQ --- highlighting \emph{near exponential} decay with respect to the best possible sampling algorithm under MMD. Our results provide theoretical justifications for empirical observations already made in the literature. 
	
	\item The introduction of the assumption of realizability in the context of KH algorithms, enabling tighter analysis for distributions close to a finitely atomic measure. 
	

	\item A \emph{distributed} algorithm for approximating expectations for large scale computations, together with a short analysis of its convergence properties. To the best of our knowledge, herding has not yet been applied in large-scale distributed settings.
	
	\item Finally, we present \emph{empirical studies} to validate rapid convergence. Since there is ample empirical evidence supporting the good performance of the KH/SBQ algorithms in earlier works~\citep{Huszar2012OptimallyWeightedHI,Chen2010SuperSamples,Bach2012Herding}, we focus instead on demonstrating the empirical performance of the distributed algorithm.
\end{itemize}

\subsection{Notation}
We represent vectors as small letter bolds, e.g., $\bu$.
Matrices are represented by capital bolds, e.g., $\bX, \bT$. 
Sets are
represented by sans serif fonts, e.g., $\sfS$; and the complement of a set $\sfS$ is $\sfS^c$. 
A dot product in a reproducing kernel Hilbert space with kernel $k(\cdot,\cdot)$ is represented as $\langle \cdot, \cdot \rangle_k$, and the corresponding norm is $\|\cdot\|_k$. The dual norm is written as $\| \cdot \|_{k*}$. We denote $\{1,2,\ldots, d\} $ by~$[d]$. 

\section{Background} \label{sec:background} 

In this section, we discuss some relevant background for the algorithms and methods at hand. 

\paragraph{Maximum Mean Discrepancy (MMD).} MMD measures the worst-case error between two probability measures $\pi$ and $\nu$ over the unit ball of a reproducing kernel Hilbert space (RKHS) $\mathcal{H}$ with kernel $k$: 
\vspace{-0.2cm}
\begin{align}
&\text{MMD}(\pi,\nu) := \sup_{\substack{f \in \mathcal{H}\\ \|f\| \leq 1}} \left| \int f(x)d\pi(x) - \int f(x) d\nu(x) \right| \nonumber \\ &= \bigg| \iint k(x,y) d\pi(x)d\pi(y) + \iint  k(x,y) d\nu(x)d\nu(y) \nonumber\\ &\qquad\qquad\qquad -2 \iint k(x,y) d\pi(x) d\nu(y)\bigg| 
\\&= \| \mu_\pi - \mu_\nu \|_k 
\end{align}
where $\mu_\pi$ and $\mu_\nu$ are the respective mean kernel embeddings of $\pi$ and $\nu$. MMD($\pi,\nu$) $\ge 0$; and if $\mathcal{H}$ is universal, then MMD($\pi,\nu$)$=0$ if and only if $\pi\equiv \nu$. We refer to~\citet{Sriperumbudur2010,Gretton2007} for further details. The sampling algorithms we consider approximate $\pi$ by constructing an empirical measure $\nu$ that is close to $\pi$ under~MMD.

\paragraph{Weighted Kernel Herding.}

\begin{algorithm}[H]
	\caption{Weighted Kernel Herding : WKH($\cX$, $m$), or Sequential Bayesian Quadrature : SBQ($\cX$, $m$) }
	\label{algo:wkh}
	\centering
	\begin{algorithmic}[1]
		\STATE  \textbf{INPUT:}  kernel function $k(\cdot,\cdot)$, number of iterations $m$
		\STATE $\sfS = \emptyset$. // \emph{Build solution set $\sfS$ greedily.}	
		\FOR{$n=1\ldots m$}
		\STATE Use~\eqref{eq:LMO} for WKH, or Use~\eqref{eq:LMO_SBQ} for SBQ, to get $\bx_n$. $\sfS = \sfS \cup \{\bx_n\}$
		
		\STATE Update weights $\bw = \bK^{-1} \bz$, 
		where $\bK_{rs} = k(\bx_r, \bx_s)$ for $r,s \in [1,n]$
		\ENDFOR
		\STATE return $\sfS$, $\bw$
	\end{algorithmic}
\end{algorithm}

Recall that our goal is to approximate the expectation of a function $f$ over some probability measure $\pi$ using a weighted empirical measure:
\begin{equation}
\bbE_\pi f(\bx) = \int_\cX f(\bx)d\pi(\bx) \approx \sum_{i=1}^n w_i f(\bx_i),
\label{eq:approxexpectation}
\end{equation}
where $w_i$ are the weights associated with function evaluations at $\bx_i$. For example, taking weights $w_i = \nicefrac{1}{n}$ and samples $\bx_i$ to be independent recovers crude MC integration. Both Kernel Herding \citep{Chen2010SuperSamples} and Quasi Monte Carlo~\citep{Dick2010book}  use $w_i=\nicefrac{1}{n}$ with dependent samples $\bx_i$. 
For brevity, in the sequel, we let $\sfS_j:=\{\bx_1, \bx_2,\ldots, \bx_j\}$ be the collection of the first $j$ samples, and $z(\sfS_j):= \sum_j w_j f(\bx_j)$. 

Here, we present a brief overview of WKH and SBQ, and point the reader to~\citet{Huszar2012OptimallyWeightedHI,Chen2010SuperSamples} for further details. 
As mentioned, these algorithms construct a weighted empirical measure by minimizing MMD to the target measure $\pi$. Greedy algorithms for constructing approximations of the form (\ref{eq:approxexpectation}) --- including WKH and SBQ --- typically involve two alternating steps: (1) generate a sample $\bx_i$; and then (2) compute the weights $w_j$ across all drawn samples $\{\bx_j\}_{j\leq i}$. 
KH chooses the next sample by minimizing MMD, taking all weights to be uniform, that is, $w_i=\nicefrac{1}{n}$ for $i=1,\dots,n$. 
More precisely, given $n$ samples $\{\bx_i\}_{i=1}^n$, the next sample $\bx_{n+1}^{KH}$ is generated according to
\begin{equation}\label{eq:LMO} 
\bx^{KH}_{n+1} = \argmin_{\bx \in \cX} \frac{n}{n+1} \sum_{i=1}^n w_i k(\bx, \bx_i) - 2\bbE_{\bx'\sim \pi}{[k(\bx,\bx')]}. 
\end{equation} 
The SBQ algorithm is slightly more involved.
Consider imposing a functional Gaussian Process (GP) on $f$ with covariance kernel $k$, that is, $f \sim GP(0,k)$. Doing so, the quantities in~\eqref{eq:approxexpectation} become random variables. The sample $\bx_i$ is then chosen to minimize posterior variance,  while the corresponding weights are calculated from the resulting posterior mean.
More precisely, suppose $\bx_1,\dots,\bx_n$ are previously drawn samples. 
A standard result in the theory of GPs asserts that the posterior of $f$, conditioned on the evaluations $\{f(\bx_i)\}_{i=1}^n$, has expectation
\[
\hat{f}(\bx) = \bk^\top \bK^{-1} \bff,
\]
where $\bff$ is the vector of function evaluations $(f(\bx_i))_{i=1}^n$, $\bk$ is the vector of kernel evaluations $(k(\bx, \bx_i))_{i=1}^n$, and $\bK := (k(\bx_i,\bx_j))_{i,j=1}^n$ is the kernel Gram matrix. Consequently, inserting $\hat{f}$ into~\eqref{eq:approxexpectation}, it becomes clear that $\bbE z(\sfS_n) = \bz^\top \bK^{-1} \bff $, where $\bz_i  := \int k(\bx, \bx_i) d\pi(\bx)$. In particular, observe that the weights in~\eqref{eq:approxexpectation} are given by $w_i = \sum_j \bz_j [\bK^{-1}]_{ij}$. 
The posterior variance becomes
\[
\text{cov}(\bx,\by) = k(\bx,\by) - \bk^\top \bK^{-1}\bk.
\]
%
Therefore, we can write the variance of $z(\sfS_n)$ as
\begin{equation}
\label{eq:sbq_costfunction}
\text{var}(z(\sfS_n))  = \iint  k(\bx,\by) d\pi(\bx) d\pi(\by) - \bz^\top \bK^{-1} \bz. 
\end{equation}

Given $n$ samples $\{\bx_i\}_{i=1}^n$, the SBQ algorithm generates the next sample $\bx_{n+1}^{SBQ}$ by minimizing the posterior variance of the approximated integral $z(\sfS_n)$:
\begin{equation}
\label{eq:LMO_SBQ}
\bx^{SBQ}_{n+1} = \argmin_{\bx \in \cX}\text{var}(z(\sfS_{n} \cup \{\bx\})).
\end{equation}
It has been shown that SBQ is equivalent to minimizing MMD with respect to both samples and weights~\citep{Huszar2012OptimallyWeightedHI}.

In this work, we first analyze WKH (Algorithm~\ref{algo:wkh}) which performs the update (\ref{eq:LMO}) for the samples while also updating weights using SBQ's  $w_i = \sum_j \bz_j [\bK^{-1}]_{ij}$. 
The same objective $\bx_{n+1}^{KH}$ was also considered by~\citet{Bach2012Herding}, with the additional constraint that the weights $w_i$ are positive and sum to unity. They use the connections to the classic  algorithm of~\citet{Frank1956bt} to establish convergence rates under this additional constraint. 

\section{Related Works} 
The connection between WKH and the Frank--Wolfe algorithm was further studied in~\citet{Briol2015FWBayesianQuadrature}, providing new variants and rates. Other variants of the Frank--Wolfe algorithm~\citep{jaggi13FW,LacosteJulien:2015wj} enjoy faster convergence at  the cost of additional memory requirements. Specifically, instead of just selecting sample points, one can think of removing bad points from the set already selected. This variant of Frank--Wolfe, known as \emph{FW with away steps}, is one of the more commonly used in practice, because it is known to converge faster. If the weights are not restricted to lie on the simplex, the analogy to matching pursuit algorithms is obvious --- we refer to~\citet{Locatello:2017tq} for corresponding convergence rates. However, the linear rates for these algorithms require bounding certain geometric properties of the constraint set, and this may not be straightforward for RKHS-based applications that usually employ KH and/or SBQ. 
There have been some recent works that provide sufficient conditions for fast convergence of SBQ under additional assumptions on sufficient exploration of point sets~\citep{Motonobu2019}, or special cases of kernels (e.g. the ones generating Sobolev spaces~\citep{santin2016convergence}).  Our setting is more general. Other studies discuss dimension-dependent convergence rates~\citep{Briol_2019} for infinitely smooth functions. Such rates often take the form $\bigO(n^{-1/d})$ for dimension $d$ --- our results have no explicit dimension-dependence. More recently,~\citet{kanagawa_convergence_2020} study convergence properties in misspecified settings. Applying our ideas to these settings may provide an interesting direction for future work.

With the goal of interpreting blackbox models,~\citet{Khanna2019} recently exploited connections with submodular optimization to provide the weaker forms of convergence we discuss (in Section~\ref{sec:realizability}), in the form of an approximation guarantee for SBQ for discrete $\pi$. Our result is more general, allowing for arbitrary probability measures. In fact, we do not make use of submodular optimization results. A similar proof technique was also used by~\citet{Khanna2017icml} for proving approximation guarantees of low rank optimization. The proof idea for the distributed algorithm was inspired from tracking the optimum set, which is a common theme in analysis of distributed algorithms in discrete  optimization (see, e.g.,~\citet{AltschulerBFMRZ16} and references therein). 

\section{Convergence Results} \label{sec:theory}

In this section, we present our convergence results. The starting point of our analysis is the following re-interpretation of the posterior variance minimization~\eqref{eq:sbq_costfunction} as a variational optimization of MMD~\citep{Huszar2012OptimallyWeightedHI}. We can re-write~\eqref{eq:sbq_costfunction} as a function of a set of chosen sample indices $\sfS$ and weights $\bw$:
\begin{eqnarray}
\label{eq:MMDcostfunction}
g(\sfS, \bw) := \| \mu_\pi - \sum_{i \in \sfS} w_i \phi(\bx_i) \|_k^2,
\end{eqnarray}
where $\phi$ is the respective feature mapping, i.e., $k(\bx,\by) = \langle \phi(\bx), \phi (\by) \rangle$, and where $\mu_\pi : = \int_{\by \sim \pi} \phi(\bx) d \pi(\bx) $ is the mean embedding in the kernel space. 

Before presenting our results, we delineate the assumptions we make on the cost function~\eqref{eq:MMDcostfunction}. There is an implicit assumption that $f$ lies within the chosen RKHS corresponding to the kernel $k$. Otherwise, there is no guarantee that any algorithm can approximate $f$ within $\epsilon$ error, and the discussion of convergence rates is meaningless. 


\begin{assumption}[Convexity/Smoothness]\label{assump:rsc} We assume that the loss function $g(\cdot)$ is $m_\omega$-restricted strongly convex and $M_\Omega$-smooth over $\sfS^\star_\perp \cup \sfS_n$, with $m_\omega > 0$ and $M_\Omega < \infty$. In other words, for $\sfS_1, \sfS_2 \subset \sfS_r^\star \cup \sfS_n$, with $Z(\sfS_j) = \sum_j w_j \phi(\bx_j)$, $Z_{12} = Z(\sfS_1) - Z(\sfS_2)$, and $D(\sfS_1, \sfS_2) =g(\sfS_1) - g(\sfS_2)- \langle \nabla g(\sfS_2), Z(\sfS_1) - Z(\sfS_2) \rangle_k $,

\begin{eqnarray*}
     \frac{m_\omega}{2} \| Z_{12}\|_k^2 \leq  D(\sfS_1, \sfS_2)   \leq   \frac{M_\Omega}{2} \| Z_{12} \|_k^2 .
\end{eqnarray*}

\end{assumption}

%


Assumption~\ref{assump:rsc} is standard for characterizing geometric rates in optimization; e.g., see~\citet[Section 9.3.1]{wainwright2019high}, and is often implicitly assumed, e.g., by~\citet{Bach2012Herding}, when drawing upon the Frank--Wolfe connection. This assumption or a slight variation of it, is not only sufficient but also necessary to bound geometric convergence rates. For our case, one can equivalently view the assumption as enforcing that the kernel matrix of the atoms $\sfS^\star_\perp \cup \sfS_n$ has a minimum eigenvalue bounded away from zero for any fixed~$n$. 

\begin{assumption}[Standardization] \label{assump:standardization} We assume that the feature mapping is standardized, that is, $k(\bx,\bx) = 1$ for all $\bx \in \cX$.

\end{assumption}

Assumption~\ref{assump:standardization} is not restrictive. Rather, as in previous works~\citep{Huszar2012OptimallyWeightedHI}, it is enforced to avoid otherwise unnecessary terms for ease of exposition.  Any kernel can be standardized over its support, and remain a kernel. 
Assumptions~\ref{assump:rsc} and~\ref{assump:standardization} are satisfied by many commonly used kernels. Any finite dimensional normalized kernel satisfy them, including the cosine kernel or polynomial kernels; and general normalized RBF kernels on compact supports. 
We are now ready to present our main result regarding the near linear convergence of the discrepancy metric $g$.

\begin{theorem}[Approximation Guarantee]\label{THM:APPROX} Suppose that Assumptions~\ref{assump:rsc} and~\ref{assump:standardization} are satisfied. Let $\sfT_r := \argmin_{|\sfS| \leq r} \min_\bw g(\sfS,\bw)$. For $0<\epsilon<1$, consider $s=(r \frac{M_\Omega}{m_\omega}\log \frac{1}{\epsilon})$ iterations of Algorithms~\ref{algo:wkh} (WKH or SBQ), returning the set $\sfS_s$. Then, $ \min_\bw g(\sfS_s, \bw) ) \leq (1- \epsilon) [ \min_\bw g(\sfT_r, \bw)] + \epsilon \| \mu_\pi\|_k^2 $. 
\end{theorem}
 
The proof for WKH is presented in Appendix~\ref{sec:appendixApproxGuaranteeProof}. Theorem~\ref{THM:APPROX} is general, and it is applicable to kernels of any dimension, including infinite dimensional kernels. Intuitively, it may be hard to claim that an infinite dimensional embedding can be closely approximated by a finite number of atoms without additional strong assumptions. However, our analysis gets around this complexity by comparing the performance of the algorithm at hand with best possible finite set of $r$ atoms. Indeed, Theorem~\ref{THM:APPROX} is somewhat weaker than a classical convergence result.  Instead of providing a rate on closeness to the optimum after $k$ iterations, it provides a contraction factor on how close the algorithm gets to the best possible $r$ steps that \emph{any} algorithm could have taken. The theoretical best possible $r$ samples could even come from an exhaustive combinatorial and computationally hard search over the entire space. However, by choosing only a multiplicative $\bigO(\log \frac{1}{\epsilon})$ number of extra atoms through the greedy selection processes in WKH or SBQ, we can get provably close to best-case performance. 


The SBQ algorithm is equivalent to Algorithm~\ref{algo:wkh}, after replacing~\eqref{eq:LMO} in Step 4 with~\eqref{eq:LMO_SBQ}. 
Thus, WKH solves a linear program every iteration, while SBQ solves a kernel $\ell_2$ minimization problem. The decrease in the cost function~\eqref{eq:MMDcostfunction} per iteration of SBQ is more than its decrease per iteration of WKH. Thus, Theorem~\ref{THM:APPROX} also recovers the special case of SBQ for discrete densities studied by~\citet{Khanna2019} by exploiting connections to weak submodularity. Their result is also an approximation guarantee (not global convergence guarantee), and only applies for SBQ. Our result is also valid for WKH and provides an alternative proof for SBQ without exploiting weak submodularity. 
%
%

\subsection{Realizability}
\label{sec:realizability}

Theorem~\ref{THM:APPROX} in itself is quite general -- it holds for any kernel of arbitrary dimensionality. Here, we further specialize Theorem~\ref{THM:APPROX} and provide a sufficient condition under which the convergence rate is geometric (instead of being near-geometric). We encapsulate this sufficient condition as the assumption of realizability. 

\begin{assumption}[Realizability]
\label{assump:realizability}
There exists a set $\sfS^\star_r$ of $r$ atoms, and weights $\bw^\ast$, such that $ g(\sfS^\star_r, \bw^\ast) = 0$.
\end{assumption}

Assumption~\ref{assump:realizability} posits that there exists a set of atoms in the mapped domain $\phi(\cX)$ whose weighted average exactly evaluates to the expectation $\mu_\pi$ so that the discrepancy $g$ is $0$. If the RKHS is universal, this is equivalent to assuming that $\pi$ is finitely atomic. Otherwise, realizability can be achieved with finite-dimensional kernels. By considering the assumption of realizability, we are able to investigate convergence rates independently from the capacity of the target measure $\pi$ to be approximated (under MMD) by an atomic distribution. Furthermore, in some common use-cases, such as the data summarization task in Section \ref{sec:Experiments}, the target measure $\pi$ is finitely atomic, in which case, realizability is automatically satisfied.

\begin{theorem}
	\label{THM:WKHLINEAR} Under Assumptions 1 through 3, if $\sfS_i$ is the sequence of iterates produced by Algorithm~\ref{algo:wkh} (WKH or SBQ), the function $g$ converges as $\min_w g(\sfS_i, \bw) \leq \exp(- \frac{i m_\omega}{r M_\Omega}) g(\emptyset, 0)$.
\end{theorem} 

\noindent
\emph{Proof Outline.} The central idea of the proof is to track and bound the selection of a sample at each iteration, compared to the ideal selection $\sfS^\star_r$ that could have provided the optimum solution. For this purpose, the properties of the selection subproblem~\eqref{eq:LMO} and the assumptions are  used. The detailed proof is presented in the Appendix~\ref{sec:appendixWKHLinearProof}.

Theorem~\ref{THM:WKHLINEAR} provides a linear convergence rate for \wkh~ under the conditions specified in Assumptions 1 through 3. Recall that~\citet{Bach2012Herding} also provided a linear rate for finite dimensional kernels by drawing on the equivalence of the herding algorithm to the Frank--Wolfe algorithm. Their rate is $\bigO(\exp{(-b^2m/R^2)})$, where $b$ is the distance of the optimum from the boundary of the marginal polytope and $R$ is the width of the marginal polytope. Our result is independent of these constants. As long as Assumption~\ref{assump:realizability} is satisfied, our work shows that exponential convergence is guaranteed. To the best of our knowledge, such a sufficient condition for these harder cases has not been previously been established.

Let us also briefly discuss the case where Assumption \ref{assump:realizability} is not satisfied, but $\pi$ is $\epsilon$-close under MMD to a finitely atomic measure $\tilde{\pi}$ that \emph{does} satisfy Assumption \ref{assump:realizability}. Using the triangle inequality for MMD, it is straightforward to show that $\min_w g(\sfS_i, \bw) \leq \exp(- \frac{i m_\omega}{r M_\Omega}) g(\emptyset, 0) + 2 \epsilon$, suggesting near-exponential convergence in these settings. Indeed, in cases where $\epsilon$ is small relative to $g(\emptyset, 0)$, this could explain the excellent empirical performance seen for WKH and SBQ.

\section{Distributed Kernel Herding}
\label{sec:Distributed}
In many cases, the search over the domain $\cX$ can be a severe computational bottleneck to practical use. In this section, we develop a new herding algorithm that can be distributed over multiple machines and run in a streaming map-reduce fashion. We also provide a quick convergence analysis, using techniques presented in Section~\ref{sec:theory}.

The algorithm proceeds as follows. The domain $\cX$ is split onto $s$ machines uniformly at random. Each of the $s$ machines has access to only $\cX_i \subset \cX$, such that $\bigcup_i \cX_i = \cX$ and $\cX_i \cap \cX_j = \emptyset $ for $i \neq j$. Each machine runs its own herding algorithm (Algorithm~\ref{algo:wkh}) independently, by restricting the search space in~\eqref{eq:LMO} to $\cX_i$, instead of $\cX$. Finally, the iterates generated by each machine are sent to a central server machine. The server \emph{collates} the samples by running another copy of the same algorithm, with $\cX$ replaced by the discrete set of samples received. Finally, the best solution out of all the $s+1$ solutions is returned. The pseudo-code is illustrated in Algorithm~\ref{algo:distributed}. In what follows, we provide a convergence guarantee in the case of realizability for a single atom.

\begin{algorithm}[H]
	\caption{Distributed Kernel Herding: Dist($\cX$, $k$)}
	\label{algo:distributed}
	\centering
	\begin{algorithmic}[1]
		\STATE  \textbf{INPUT:}kernel function $k(\cdot,\cdot)$, number of iterations $k$
		\STATE Partition $\cX \supset \cX_i, i \in [s]$ uniformly at random and transmit $\cX_i$ to machine $i$
		\STATE // \emph{Receive solution sets}
		\STATE $\sfG_i, \bw_i$ $\leftarrow$ WKH($\cX_i$, $k$)  // \emph{run in parallel $\forall i \in [s]$}
		\STATE $\cY = \cup_i \sfG_i$
		\STATE $\sfG_{s+1}, \bw_{s+1} \leftarrow$ WKH($\cY$, $k$)
		\STATE $i^\star \leftarrow \argmin_{i \in [s+1]} g(\sfG_i,\bw_i)$
		\STATE return $\sfG_{i^\star}$, $\bw_{i^\star}$ 
		\end{algorithmic}
\end{algorithm}

\begin{theorem}
	\label{THM:DISTRIBUTED}  Under Assumption 1 with $r=1$, and Assumptions 2 and 3, if $\sfS_i$ is the sequence of iterates produced by Algorithm~\ref{algo:distributed}, the function $g(\cdot)$ converges as $\bbE \min_\bw g(\sfS_i ,\bw) \leq \exp(- \frac{i m_\omega}{r M_\Omega}) g(\emptyset, 0)$.  
\end{theorem}

\noindent
\emph{Proof Sketch:} Note that the final set of filtered iterates outputted are the \emph{best} out of $(s+1)$ possible sequences.  The proof tracks the possibilities for $\sfS_1^\star$ when $\cX$ is split. The goal is to then show that under all possible scenarios, at least one of the sequences converges linearly. The convergence of individual sequences is based on the proof techniques used in proof of Theorem~\ref{THM:WKHLINEAR}. 


We remark that, for the more general case of $r>1$, our proof technique does not give a non-trivial convergence rate. 
This is likely an artifact of our proof technique, and improving it is an interesting open question for future research. Nevertheless, as we shall see in the following section, the algorithm displays improved performance in practice.

\section{Experiments}\label{sec:Experiments}



We refer the readers to earlier works for empirical evidence of the relative performance of the Herding and SBQ algorithms~\citep{Bach2012Herding,Chen2010SuperSamples,Welling2010,Welling2009a,Welling2009b,Huszar2012OptimallyWeightedHI,Ghahramani2002BayesianMC}.
In this section, we focus on studying the distributed versions of these algorithms to illustrate the speed / accuracy tradeoff.

\subsection{Matching a Distribution}
In this study, our goal is to show the tradeoff between performance and scalability when WKH/SBQ are distributed over multiple machines. Towards this end, we extend an experiment considered in~\citet{Chen2010SuperSamples,Huszar2012OptimallyWeightedHI}. In this experiment, the target density is a mixture of $20$ two-dimensional Gaussian distributions. Samples are chosen by the contending algorithms, and the MMD distance of the sample distribution to the target distribution is reported for different number of samples. 

\begin{figure}[h]

	\centering
	\includegraphics[scale=0.5]{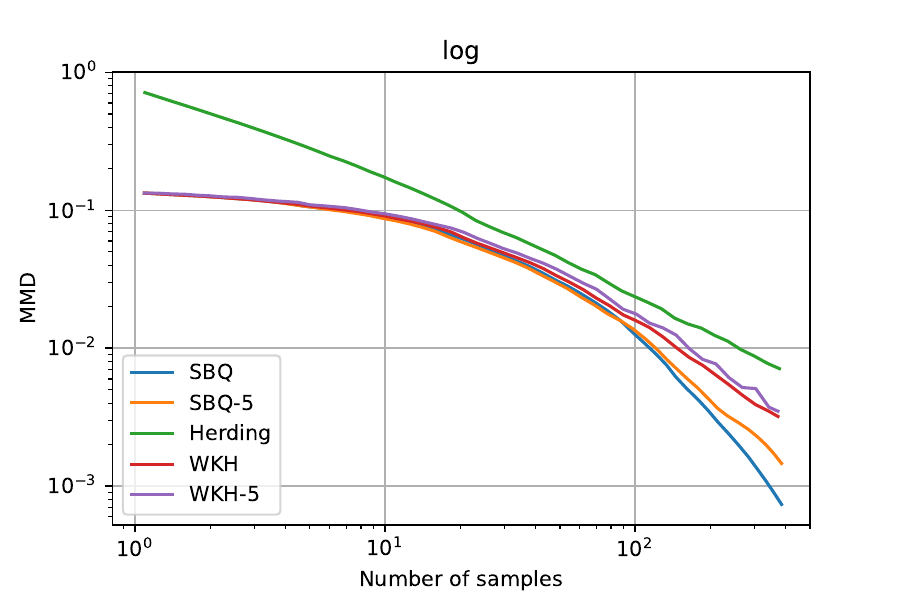}
	\caption{Simulated data results for distributed SBQ/WKH. Herding is WKH with uniform weights. SBQ/WKH are single machine algorithms, while SBQ-5/WKH-5 are their respective distributed versions.}
	\label{fig:simulated}

\end{figure}

The sampling subroutine (step 4 in Algorithm~\ref{algo:wkh}) requires solving an optimization problem over a continuous domain. To make the problem easier,~\citet{Chen2010SuperSamples} and \citet{Huszar2012OptimallyWeightedHI} first select 10000 points uniformly at random as the set of potential candidates. They note that this does not affect the performance of chosen samples by much. We also adopt the same methodology with the additional step of arbitrarily partitioning these points over $s$ machines for $s=1,5$. Our objective is to illustrate the degradation of performance due to partitioning in Algorithm~\ref{algo:distributed}.

The results are reported in Figure~\ref{fig:simulated}. SBQ-5 and WKH-5 are distributed versions of SBQ and WKH respectively, run over $5$ machines. The labels SBQ and WKH are for their respective single machine versions. Since the search space is split and the search step is parallelized, we receive a five-fold speedup in the algorithms, with a graceful degradation in the reported MMD. We notice a similar pattern of degradation for larger number of machines, but this is omitted from the graph to avoid overcrowding.

\subsection{Data Summarization}
In this study, we apply Algorithm~\ref{algo:distributed} to the task of data summarization under logistic regression, as considered by~\citet{HugginsCB16}. The task of data summarization is as follows. The goal is to select a few data samples that represent the data distribution \emph{sufficiently} well, so that a model built on the selected subsample of the training data does not degrade too much in performance on unseen test data. More specifically, we are interested in approximating the test distribution (i.e., discrete $\pi$) using a few samples from the training set. Hence, algorithms such as SBQ and WKH are applicable, provided we have a reasonable kernel function. Recently,~\citet{Khanna2019} used SBQ with Fisher Kernels~\citep{Jaakkola1999} for this task. By using the distributed SBQ/WKH over $s$ machines,  we obtain a roughly $s$-times speedup on the run time with minimal loss in the log-likelihood of the selected sample when compared to the results of~\citet{Khanna2019} for this task across three different datasets namely \texttt{ChemReact}, \texttt{CovType} and \texttt{WebSpam}. 
\begin{figure}
\centering
	\begin{subfigure}[b]{0.45\textwidth}
		\centering
		\includegraphics[scale=0.34]{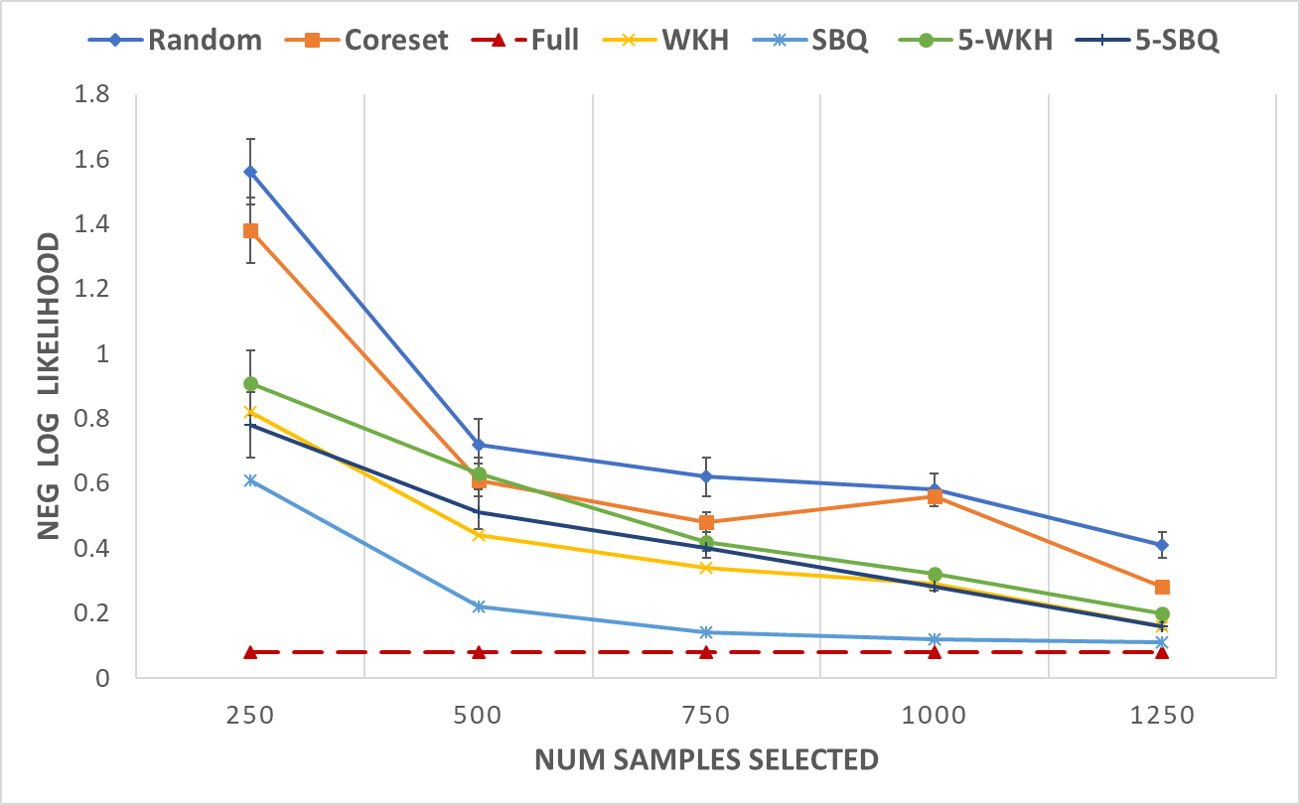}
		\caption{\texttt{ChemReact}}
	\end{subfigure}
		~
	\begin{subfigure}[b]{0.45\textwidth}
		\centering
			\includegraphics[scale=0.34]{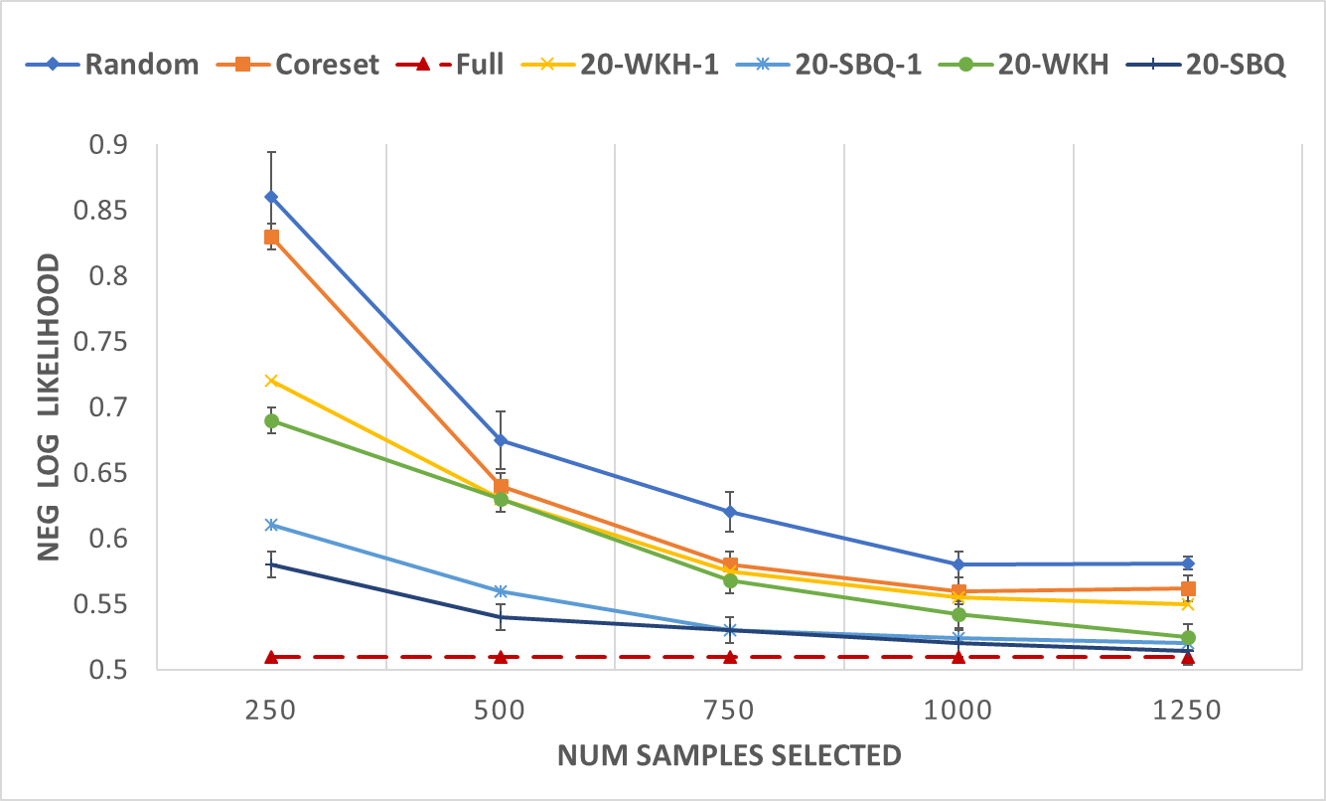}
		\caption{\texttt{CovType}}
	\end{subfigure}	

	\vspace{4mm}
	\begin{subfigure}[b]{0.45\textwidth}
		\centering
		\includegraphics[scale=0.34]{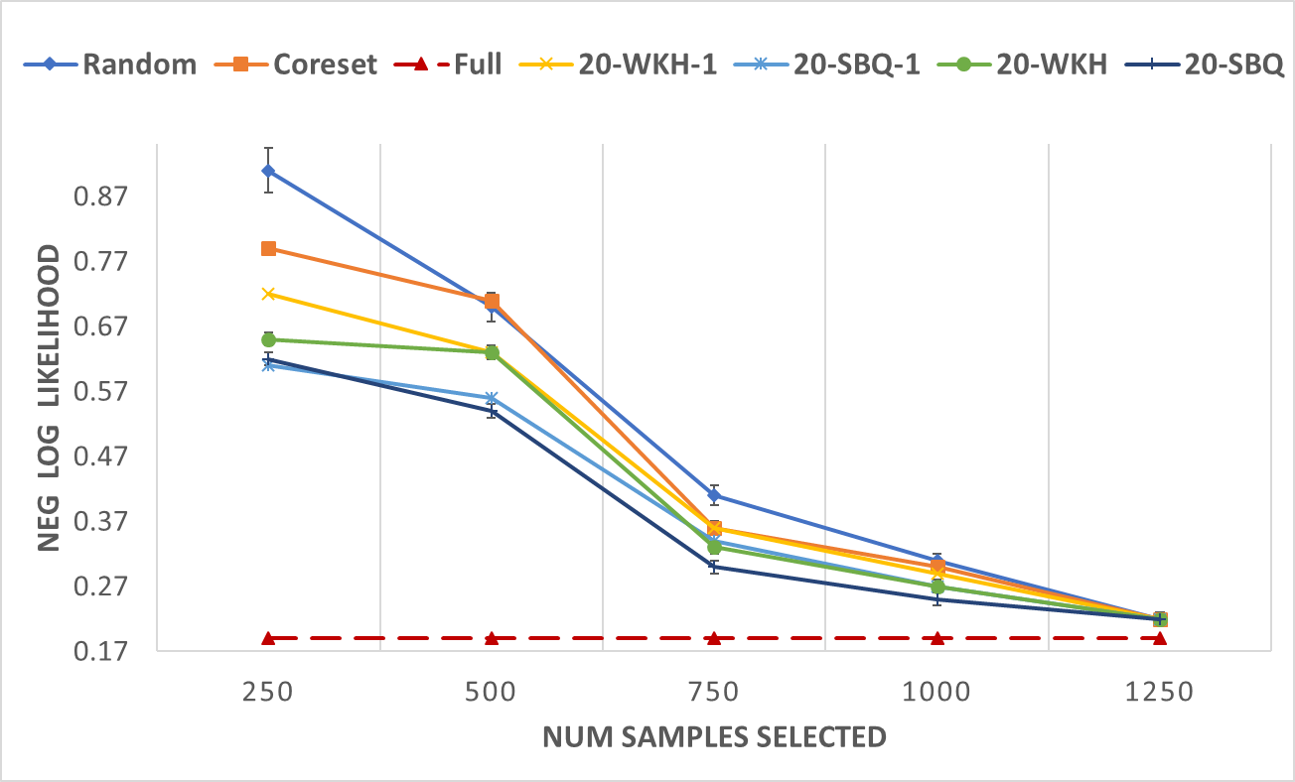}
		\caption{\texttt{WebSpam}}
	\end{subfigure}
	\caption{Performance for logistic regression over three datasets for variants of sampling methods. `Full' reports the numbers for training with the entire training set. For the algorithms $s$-SBQ and $s$-WKH, $s$ represents the number of machines used to select the samples. WKH and SBQ represent single machine versions of the algorithms for the smallest dataset, the other two datasets were too big to run on a single machine. The s-WKH-1 experiment is obtained by using only the output of a single split out of $s$ of the dataset run on only one of the machines. Across the three different datasets, the distributed versions of SBQ/WKH proposed in this paper show minimal loss in accuracy while achieving almost linear speedup.}
	\label{fig:coresets}
\end{figure}

\paragraph{Fisher Kernels.} For completion, we provide a brief overview of constructing the Fisher kernels. Suppose that we have a parametric model that we learn using maximum likelihood estimation, i.e.,  $\hat{\theta} := \arg\max \log p(\bX | \theta) $, where $\theta$ represents the model parameters and $\bX$ the data. The notion of similarity that Fisher kernels employ is that 
if two objects are \emph{structurally} similar as the model \emph{sees them}, then slight perturbations in the neighborhood of the fitted parameters $\hat{\theta}$ would impact the fit of the two objects similarly. In other words, the feature embedding $\bff_i := \frac{\partial \log p(\bX_i | \theta)}{\partial \theta} |_{\theta = \hat{\theta}}$, for an object $\bX_i \rightarrow \bff_i$ can be interpreted as a \emph{feature mapping} which can then be used to define a similarity kernel by a weighted dot product:
\[ 
\kappa(\bX_i, \bX_j) := \bff_i^\top \cI^{-1} \bff_j,
\]
where $\cI:= \bbE_{p(\bX)}[ \frac{\partial  \log p(\bX | \theta)}{\partial \theta}^\top\frac{\partial \log p(\bX | \theta)}{\partial \theta}]$ is the Fisher information matrix. 
The information matrix serves only to re-scale the feature space, and can be taken to be the identity without sacrificing any asymptotic properties~\citep{Jaakkola1999}. 
The corresponding kernel is then called the \emph{practical} Fisher kernel, and is often used in practice.

Another method for training data summarization is that of coreset selection~\citep{HugginsCB16}, which aims to reduce the training data size for optimization speedup while retaining guaranteed approximation to the training likelihood. In line with its objective, coreset selection algorithms are required to be fast.
Therefore, for efficiency, the coreset selection algorithm is usually closely tied with the respective model, as opposed to being a model-agnostic. 

We employ different variants of WKH/SBQ to the problem of training data summarization under logistic regression, as considered by~\citet{HugginsCB16} using coreset construction. We experiment using three datasets \texttt{ChemReact}, \texttt{CovType} and \texttt{WebSpam}. \texttt{ChemReact} consists of $26,733$ chemicals each of feature size $100$. Out of these,  $2500$ are test data points. The prediction variable is $0/1$ and signifies if a chemical is reactive. \texttt{CovType} has $581,012$ examples each of feature size $54$. Out of these, $29,000$ are test points. The task is to predict whether a type of tree is present in each location or not. \texttt{WebSpam} has 350,000 webpages each having $127$ features. Out of these, $50,000$ are test data points. The task here is to predict whether a webpage is spam or not. We refer to~\citet{HugginsCB16} for source of the datasets.

In each of the datasets, we further randomly split the training data into $10\%$ validation and $90\%$ training. We train the logistic regression model on the new training data, and we use the validation set as a proxy to the unseen test set distribution. We build the kernel matrix $\bK$ and the affinity vector $\bz$, and we run different variants of sampling algorithms to choose samples from the training set to approximate the discrete validation set distribution in the Fisher kernel space. Once the training set samples are extracted, we rebuild the logistic regression model only on the selected samples, and we report negative test likelihood on unseen test data to show how well has the respective algorithm built a model specific dataset summary.

\texttt{ChemReact} is small enough to fit on a single machine, so we run WKH and SBQ on a single machine. To present the tradeoff, we also run $5$-WKH and $5$-SBQ.
These are about five times faster than their single machine counterparts, but they degrade in predictive performance. 
Our aim is to compare distributed WKH/SBQ against their single machine counterparts. For completeness, we also include  the results of coreset selection algorithm and  random data selection as implemented by~\citet{HugginsCB16}, since these algorithms were tested by~\citet{Khanna2019} on the same problem. To keep the focus on our goal of distributing WKH/SBQ, we do not compare against other coreset algorithms, since coreset construction is not the central goal of this paper. We note that generally SBQ has better performance numbers than WKH for same $k$ across different values of $k$. Note that \texttt{WebSpam} and \texttt{CovType} were too big to run on a single machine, and they are thus perfect examples to illustrate the impact and usefulness of the distributed algorithm. All the experiments were run on 12-core 16GB RAM machines. For all the experiments we conducted, the variance over multiple runs of the distributed algorithm was very low (almost $0$), and the trend of relative performance remained the same.  

The results are presented in Figure~\ref{fig:coresets}. The algorithms we run are WKH, SBQ, $s$-SBQ and $s$-WKH, where $s$ represents the number of machines used to select $m$ samples for different values of $m$. The s-WKH-1 experiment uses only the output of a single split out of $s$ draws from the dataset, run on only one of the machines. 
For completeness, we also include ``Random'' which selects the data points uniformly at random, and ``Coreset'' which was proposed by~\citet{HugginsCB16}.

%

%

\section{Conclusion} 

We have analysed two existing algorithms --- WKH and SBQ --- as well as a new distributed algorithm for estimating expectations. 
Our results help to bridge the gap between theory and empirical performance by showing that these algorithms perform comparably to the theoretical best possible sampling method over MMD, and exhibit geometric convergence rates for finitely atomic target measures.
Our realizability assumption is the key insight that allows us to improve upon previous results. 
However, we were unable to develop convergence rates for the distributed algorithm over arbitrary atomic target measures using the techniques presented. Developing new methods to address this is the subject of future work.


\bibliography{bibliography}

\begin{thebibliography}{26}
\providecommand{\natexlab}[1]{#1}
\providecommand{\url}[1]{\texttt{#1}}
\expandafter\ifx\csname urlstyle\endcsname\relax
  \providecommand{\doi}[1]{doi: #1}\else
  \providecommand{\doi}{doi: \begingroup \urlstyle{rm}\Url}\fi

\bibitem[Altschuler et~al.(2016)Altschuler, Bhaskara, Fu, Mirrokni,
  Rostamizadeh, and Zadimoghaddam]{AltschulerBFMRZ16}
Jason Altschuler, Aditya Bhaskara, Gang Fu, Vahab~S. Mirrokni, Afshin
  Rostamizadeh, and Morteza Zadimoghaddam.
\newblock Greedy column subset selection: New bounds and distributed
  algorithms.
\newblock In \emph{ICML}, 2016.

\bibitem[Bach et~al.(2012)Bach, Lacoste-Julien, and Obozinski]{Bach2012Herding}
Francis Bach, Simon Lacoste-Julien, and Guillaume Obozinski.
\newblock On the equivalence between herding and conditional gradient
  algorithms.
\newblock In \emph{ICML}, pages 1355--1362, 2012.

\bibitem[Briol et~al.(2015)Briol, Oates, Girolami, and
  Osborne]{Briol2015FWBayesianQuadrature}
Fran\c{c}ois-Xavier Briol, Chris Oates, Mark Girolami, and Michael~A Osborne.
\newblock Frank-{W}olfe {B}ayesian {Q}uadrature: Probabilistic integration with
  theoretical guarantees.
\newblock In \emph{NIPS}, pages 1162--1170, 2015.

\bibitem[Briol et~al.(2019)Briol, Oates, Girolami, Osborne, and
  Sejdinovic]{Briol_2019}
François-Xavier Briol, Chris~J. Oates, Mark Girolami, Michael~A. Osborne, and
  Dino Sejdinovic.
\newblock Probabilistic integration: A role in statistical computation?
\newblock \emph{Statistical Science}, 34\penalty0 (1):\penalty0 1–22, Feb
  2019.
\newblock ISSN 0883-4237.
\newblock \doi{10.1214/18-sts660}.

\bibitem[Chen et~al.(2010)Chen, Welling, and Smola]{Chen2010SuperSamples}
Yutian Chen, Max Welling, and Alexander~J. Smola.
\newblock Super-samples from kernel herding.
\newblock In \emph{UAI}, 2010.

\bibitem[Dick and Pillichshammer(2010)]{Dick2010book}
Josef Dick and Friedrich Pillichshammer.
\newblock \emph{Digital Nets and Sequences: Discrepancy Theory and Quasi-Monte
  Carlo Integration}.
\newblock Cambridge University Press, New York, NY, USA, 2010.

\bibitem[Frank and Wolfe(1956)]{Frank1956bt}
Marguerite Frank and Philip Wolfe.
\newblock {An algorithm for quadratic programming}.
\newblock \emph{Naval research logistics quarterly}, 1956.

\bibitem[Ghahramani and Rasmussen(2003)]{Ghahramani2002BayesianMC}
Zoubin Ghahramani and Carl~E. Rasmussen.
\newblock Bayesian {M}onte {C}arlo.
\newblock In \emph{NIPS}, pages 505--512. 2003.

\bibitem[Gretton et~al.(2007)Gretton, Borgwardt, Rasch, Sch\"{o}lkopf, and
  Smola]{Gretton2007}
Arthur Gretton, Karsten Borgwardt, Malte Rasch, Bernhard Sch\"{o}lkopf, and
  Alex~J. Smola.
\newblock A kernel method for the two-sample-problem.
\newblock In \emph{NIPS}, pages 513--520. 2007.

\bibitem[Huggins et~al.(2016)Huggins, Campbell, and Broderick]{HugginsCB16}
Jonathan~H. Huggins, Trevor Campbell, and Tamara Broderick.
\newblock Coresets for scalable {B}ayesian logistic regression.
\newblock In \emph{NIPS}, pages 4080--4088, 2016.

\bibitem[Huszar and Duvenaud(2012)]{Huszar2012OptimallyWeightedHI}
Ferenc Huszar and David~K. Duvenaud.
\newblock Optimally-weighted herding is {B}ayesian quadrature.
\newblock In \emph{UAI}, 2012.

\bibitem[Jaakkola and Haussler(1999)]{Jaakkola1999}
Tommi~S. Jaakkola and David Haussler.
\newblock Exploiting generative models in discriminative classifiers.
\newblock In \emph{NIPS}, pages 487--493, 1999.

\bibitem[Jaggi(2013)]{jaggi13FW}
Martin Jaggi.
\newblock Revisiting {Frank-Wolfe}: Projection-free sparse convex optimization.
\newblock In \emph{ICML}, pages 427--435, 2013.

\bibitem[Kanagawa and Hennig(2019)]{Motonobu2019}
Motonobu Kanagawa and Philipp Hennig.
\newblock Convergence guarantees for adaptive bayesian quadrature methods.
\newblock In \emph{NeurIPS}, pages 6234--6245. 2019.

\bibitem[Kanagawa et~al.(2020)Kanagawa, Sriperumbudur, and
  Fukumizu]{kanagawa_convergence_2020}
Motonobu Kanagawa, Bharath~K. Sriperumbudur, and Kenji Fukumizu.
\newblock Convergence {Analysis} of {Deterministic} {Kernel}-{Based}
  {Quadrature} {Rules} in {Misspecified} {Settings}.
\newblock \emph{Foundations of Computational Mathematics}, 20\penalty0
  (1):\penalty0 155--194, February 2020.

\bibitem[Khanna et~al.(2017)Khanna, Elenberg, Dimakis, Ghosh, and
  Negahban]{Khanna2017icml}
Rajiv Khanna, Ethan Elenberg, Alexandros~G. Dimakis, Joydeep Ghosh, and Sahand
  Negahban.
\newblock On approximation guarantees for greedy low rank optimization.
\newblock In \emph{{ICML}}, 2017.

\bibitem[Khanna et~al.(2019)Khanna, Kim, Ghosh, and Koyejo]{Khanna2019}
Rajiv Khanna, Been Kim, Joydeep Ghosh, and Oluwasanmi Koyejo.
\newblock Interpreting black box predictions using {F}isher kernels.
\newblock In \emph{AISTATS}. PMLR, 2019.

\bibitem[Lacoste-Julien and Jaggi(2015)]{LacosteJulien:2015wj}
Simon Lacoste-Julien and Martin Jaggi.
\newblock {On the Global Linear Convergence of Frank-Wolfe Optimization
  Variants}.
\newblock In \emph{NIPS 2015}, pages 496--504, 2015.

\bibitem[Locatello et~al.(2017)Locatello, Khanna, Tschannen, and
  Jaggi]{Locatello:2017tq}
Francesco Locatello, Rajiv Khanna, Michael Tschannen, and Martin Jaggi.
\newblock A unified optimization view on generalized matching pursuit and
  {Frank-Wolfe}.
\newblock In \emph{AISTATS}, 2017.

\bibitem[O{'}Hagan(1991)]{Ohagan1991}
Anthony O{'}Hagan.
\newblock Bayes-{H}ermite quadrature.
\newblock \emph{Journal of Statistical Planning and Inference}, 29\penalty0
  (3):\penalty0 245 -- 260, 11 1991.

\bibitem[Santin and Haasdonk(2016)]{santin2016convergence}
Gabriele Santin and Bernard Haasdonk.
\newblock Convergence rate of the data-independent $p$-greedy algorithm in
  kernel-based approximation.
\newblock \emph{arXiv 1612.02672}, 2016.

\bibitem[Sriperumbudur et~al.(2010)Sriperumbudur, Gretton, Fukumizu,
  Sch\"{o}lkopf, and Lanckriet]{Sriperumbudur2010}
Bharath~K. Sriperumbudur, Arthur Gretton, Kenji Fukumizu, Bernhard
  Sch\"{o}lkopf, and Gert~R.G. Lanckriet.
\newblock Hilbert space embeddings and metrics on probability measures.
\newblock \emph{JMLR}, 11:\penalty0 1517--1561, August 2010.
\newblock ISSN 1532-4435.

\bibitem[Wainwright(2019)]{wainwright2019high}
Martin~J Wainwright.
\newblock \emph{High-dimensional statistics: A non-asymptotic viewpoint},
  volume~48.
\newblock Cambridge University Press, 2019.

\bibitem[Welling(2009{\natexlab{a}})]{Welling2009a}
Max Welling.
\newblock Herding dynamic weights for partially observed random field models.
\newblock In \emph{UAI}, pages 599--606, 2009{\natexlab{a}}.

\bibitem[Welling(2009{\natexlab{b}})]{Welling2009b}
Max Welling.
\newblock Herding dynamical weights to learn.
\newblock In \emph{ICML}, pages 1121--1128, 2009{\natexlab{b}}.

\bibitem[Welling and Chen(2010)]{Welling2010}
Max Welling and Yutian Chen.
\newblock Statistical inference using weak chaos and infinite memory.
\newblock \emph{Journal of Physics: Conference Series}, 233:\penalty0 012005,
  jun 2010.

\end{thebibliography}
\appendix

\newcommand{\rajiv}[1]{\textcolor{red}{Rajiv:\ #1}}

\appendix\section{Appendix}
\label{sec:appendix}




\subsection{Proof of Theorem~\ref{THM:WKHLINEAR}}
\label{sec:appendixWKHLinearProof}
We begin by first proving Theorem~\ref{THM:WKHLINEAR}, since the additional assumption of realizability makes it an easier read. For further ease of exposition, instead of directly working with $g(\cdot)$, we translate the function to remove any constants not dependent on the variable. We write, 
\[ 
l( \sfS) := \|\mu_\pi\|_k^2 - g(\sfS)  = \bz^\top \bK^{-1} \bz.
\]
Some auxiliary Lemmas are proved later in this section. We use $Z(\sfS_j):= \sum_j w_j \phi(\bx_j)$ Further, note that the Assumption 2, when applied for $h(\cdot)$, ensures that for any iterates considered in this proof we have that 
\begin{align*}
& \hspace{-10mm} - \frac{m_\omega}{2} \| Z(\sfS_i) - Z(\sfS_j) \|_k^2 \\& \ge  l(\sfS_i) - l(\sfS_j) - \langle \nabla l(\sfS_i), Z(\sfS_i) - Z(\sfS_j) \rangle_k \\ &\geq  - \frac{M_\Omega}{2} \| Z(\sfS_i) - Z(\sfS_j) \|_k^2 .
\end{align*}

\begin{proof}
	Say $(i-1)$ steps of the Algorithm~\ref{algo:wkh} have been performed to select the set $\sfS$. Let $\bw \in \bbR^{(i-1)}$ be the corresponding weight vector. At the $i^{\text{th}}$ step, $\bx_i$ is sampled as per~\eqref{eq:LMO}. Let $h(\sfS, \bu) := \| \mu_\pi \|_k^2 - \| \mu_\pi - \sum_j u_j \phi(\bx_j) \|_k^2$, so that $l (\sfS) = \min_{\bu} h(\sfS,\bu)$ (as per Lemma~\ref{lem:weightsAreOptimal}) . Set weight vector $\bu \in \bbR^i$ as follows. For $j \in [0,i-1]$, $u_i = w_i$. Set $u_i = \alpha$, where $\alpha $ is an arbitrary scalar.  
	
	From weight optimality proved in Lemma~\ref{lem:weightsAreOptimal},
	\begin{align*}
	l( \sfS \cup \{\bx_i\}   ) - l (\sfS) 
	\geq h ( \sfS \cup \{\bx_i\},  \bu) - l(\sfS)   ,
	\end{align*}
	for an arbitrary $\alpha \in \bbR$. 
	From Assumption 2 (smoothness),
	\begin{align*}
	l( \sfS \cup \{\bx_i\}   ) - l (\sfS)  \geq  \alpha \langle\nabla l(\sfS), \phi(\bx_i) \rangle_k   - \alpha^2 \frac{M_\Omega}{2}.
	\end{align*}
	Let $\gamma_\sfS$ be the optimum value of the solution of~\eqref{eq:LMO}. 
	Since $\bx_i$ is the optimizing atom, 
	\begin{align*}
	l( \sfS \cup \{\bx_i\}   ) - l (\sfS)  \geq  \alpha \gamma_\sfS   - \alpha^2 \frac{M_\Omega}{2}  .
	\end{align*}
	 Let $\sfS^\star_\perp$ be the set obtained by orthogonalizing $\sfS^\star_r$ with respect to $\sfS$ using the Gram-Schmidt procedure. Putting in $\alpha = \frac{\gamma_\sfS}{M_\Omega} $, we get, 
	\begin{align} \label{eq:lowerboundStep1} 
	l( \sfS \cup \{\bx_i\}   ) - l (\sfS) & \geq  \frac{1}{2 M_\Omega} \gamma_\sfS \\\nonumber
	& \geq  \frac{1}{2 r  M_\Omega} \sum_{\bx_j \in \sfS^\star_\perp} \langle \phi(\bx_j), \nabla l(\sfS) \rangle_k^2 \\\label{eq:lowerboundstep3}
	& \geq \frac{ m_\omega}{r  M_\Omega} \left( l( \sfS \cup \sfS^\star_\perp) - l(\sfS) \right)\\\nonumber
	& \geq \frac{m_\omega}{r  M_\Omega} \left( l(\sfS^\star_r) - l(\sfS) \right)\\\nonumber
	& = \frac{m_\omega}{r  M_\Omega} \left( \|\mu_\pi\|_k^2 - l(\sfS) \right)  .
	\end{align}

The second inequality is true because $\gamma_\sfS = \langle\nabla l(\sfS), \bx_i \rangle_k  $ is the value of~\eqref{eq:LMO} in the $i^{\text{th}}$ iteration. The third inequality follows from Lemma~\ref{lem:upperbound}. The fourth inequality is true because of monotonicity of $l(\cdot)$, and the final equality is true because of Assumption 1 (realizability). 

Let $C:= \frac{m_\omega}{r  M_\Omega}$. We have $l( \sfS \cup \{\bx_i\}   ) - l (\sfS) =  g (\sfS) - g( \sfS \cup \{\bx_i\}   ) \geq C g(\sfS) \implies g(\sfS \cup \{\bx_i\}) \leq (1- C) g(\sfS)  $.
The result now follows. 

\end{proof}

\subsection{Proof of Theorem~\ref{THM:APPROX}}
\label{sec:appendixApproxGuaranteeProof}
\begin{proof}
	
We proceed as in the proof of Theorem~\ref{THM:WKHLINEAR}, but by replacing $\sfS_r^\star$ with $\sfT_r$. From~\eqref{eq:lowerboundstep3}, 

\begin{align*}
l( \sfS \cup \{\bx_i\}   ) - l (\sfS) &\geq  \frac{ m_\omega}{r  M_\Omega} \left( l(\sfT_r ) - l(\sfS) \right).\\\nonumber 
\end{align*}

Adding and subtracting $l(\sfT_r)$ on the LHS and rearranging,

 \begin{align*}
 l(\sfT_r) - l ( \sfS \cup \{\bx_i\}   ) & \leq (1-\frac{ m_\omega}{r  M_\Omega} ) (l(\sfT_r) - l ( \sfS   )).
 \end{align*}

Thus after $k$ iterations, 

 \begin{align*}
l(\sfT_r) - l ( \sfS_k  ) & \leq (1-\frac{ m_\omega}{r  M_\Omega} )^k \left(l(\sfT_r) - l ( \emptyset   )\right).
\end{align*}

Rearranging, 
\begin{align*}
l(\sfS_k) &\geq \left( 1 - (1-\frac{ m_\omega}{r  M_\Omega} )^k \right) l(\sfT_r)\\
&\geq \left(1 - \exp(-\frac{k m_\omega}{r  M_\Omega} )\right) l(\sfT_r).
\end{align*}

With $k=(r \frac{M_\Omega}{m_\omega}\log \frac{1}{\epsilon})$, we get,
\begin{align*}
l(\sfS_k) \geq (1- \epsilon) l(\sfT_r).
\end{align*}
The result now follows.
\end{proof}
\subsection{Auxiliary Lemmas}

The following Lemma proves that the weights $w_i$ in $g(\sfS)$ obtained using the posterior inference are an optimum choice that minimize the distance to $\mu_\pi$ in the RKHS over any set of weights~\citep{Khanna2019}. 

\begin{lemma} 
	\label{lem:weightsAreOptimal} 
The residual $\mu_\pi - \sum_j w_j \phi(\bx_j)$ is orthogonal to $ \bx_i \in \sfS\, \forall i$. In other words, for any set of samples $\sfS$,	$g(\sfS) = \min_{\bu} \| \mu_\pi - \sum_i u_i \phi(\bx_i)\|_k $. 
\end{lemma}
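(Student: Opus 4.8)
The plan is to recognize this as a standard Hilbert space projection statement and prove it directly from the first-order optimality conditions. Write $F(\bu) := \| \mu_p - \sum_{i \in \sfS} u_i \phi(\bx_i)\|_k^2$, a convex quadratic in the finite vector $\bu \in \bbR^{|\sfS|}$. Since $F$ is convex, a minimizer is characterized by the stationarity condition $\nabla_{\bu} F(\bu) = \0$. Computing the gradient, $\pd{F}{u_j} = -2 \langle \mu_p - \sum_i u_i \phi(\bx_i), \phi(\bx_j) \rangle_k$ for each $j \in \sfS$. Setting this to zero gives exactly the orthogonality statement: the residual $\mu_p - \sum_i u_i \phi(\bx_i)$ is orthogonal to $\phi(\bx_j)$ for every $j \in \sfS$.

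Next I would verify that the weights $w_i$ produced by the algorithm satisfy precisely this condition. Recall that the algorithm sets $\bw = \bK^{-1}\bz$ where $\bK_{ij} = k(\bx_i,\bx_j) = \langle \phi(\bx_i),\phi(\bx_j)\rangle_k$ and $\bz_i = \int k(\bx,\bx_i) p(\bx)d\bx = \langle \mu_p, \phi(\bx_i)\rangle_k$. Then for each $j$, $\langle \sum_i w_i \phi(\bx_i), \phi(\bx_j)\rangle_k = \sum_i w_i \bK_{ij} = (\bK\bw)_j = \bz_j = \langle \mu_p,\phi(\bx_j)\rangle_k$, so the residual is orthogonal to each $\phi(\bx_j)$, $j \in \sfS$. (If $\bK$ is singular one replaces the inverse by a pseudoinverse or restricts to a maximal linearly independent subset of the atoms; the normal equations $\bK\bw = \bz$ are still consistent because $\bz$ lies in the column space of $\bK$, both being determined by inner products against the $\phi(\bx_i)$.) Hence the algorithm's $\bw$ is a stationary point of the convex function $F$, so it is a global minimizer, i.e. $g(\sfS) = F(\bw) = \min_{\bu} F(\bu)$, which is the claim (up to the square, matching the paper's notation).

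Finally, to close the loop with how the lemma is used in the proof of Theorem~\ref{thm:WKHLinear}, I would note the immediate consequence that $l(\sfS) = \|\mu_p\|_k^2 - g(\sfS) = \max_{\bu} h(\sfS,\bu)$ where $h(\sfS,\bu) = \|\mu_p\|_k^2 - \|\mu_p - \sum_i u_i\phi(\bx_i)\|_k^2$; this is exactly the fact invoked when lower-bounding $l(\sfS\cup\{\bx_i\})$ by $h(\sfS\cup\{\bx_i\},\bu)$ for the particular suboptimal choice of $\bu$ used there. I do not expect a genuine obstacle here: the only subtlety worth a sentence is the possible rank-deficiency of $\bK$ (e.g. if a newly added atom happens to lie in the span of earlier ones), which is handled by the pseudoinverse remark above and, in any case, is ruled out along the algorithm's trajectory by the discussion following Assumption 2.
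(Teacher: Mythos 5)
Your proposal is correct and follows essentially the same route as the paper: both verify that $\bw = \bK^{-1}\bz$ satisfies the normal equations $\bK\bw=\bz$, i.e.\ that the residual $\mu_p - \sum_j w_j\phi(\bx_j)$ is orthogonal to each $\phi(\bx_i)$, and conclude optimality from this orthogonality (the paper by direct expansion of the double sum, you by the equivalent matrix identity plus convexity of the quadratic). Your added remarks on the rank-deficient case and on how the lemma feeds into Theorem~\ref{thm:WKHLinear} are sensible but do not change the argument.
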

\begin{proof}
	Recall that $w_i = \sum_j [\bK^{-1}]_{ij} \bz_j $, and $\bz_i = \int k(\bx,\bx_i) d\pi(\bx)$. For an arbitrary index $i$, 
	
	\begin{align*}
	& \hspace{-10mm} \langle \mu_\pi - \sum_j w_j \phi(\bx_j), \phi(\bx_i) \rangle_k \\ 
	= & \int k(\bx, \bx_i) d\pi(\bx) - \langle \sum_j w_j \phi(\bx_j), \phi(\bx_i)\rangle_k \\ 
	= & \bz_i  - \langle \sum_j w_j \phi(\bx_j), \phi(\bx_i)\rangle_k \\ 
	= & \bz_i  - \sum_j w_j k(\bx_j, \bx_i)\\ 
	= & \bz_i  - \sum_j \sum_t  [\bK^{-1}]_{tj} \bz_t  k(\bx_j, \bx_i)\\ 
	= & \bz_i  - \sum_t \bz_t \sum_j  \bK_{ji} [\bK^{-1}]_{tj}   \\ 
	= & \bz_i  - \bz_i ,  \\ 
	\end{align*}  
	where the last equality follows by noting that  $\sum_j \bK_{ji}  [\bK^{-1}]_{tj} $ is inner product of row $i$ of $\bK$ and row $t$ of $\bK^{-1}$, which is $1$ if $t=i$ and $0$ otherwise. This completes the proof. 
\end{proof}

\begin{lemma} For any set of chosen samples $\sfS_1$, $\sfS_2$, let $\cP$ be the operator of projection onto span($\sfS_1 \cup \sfS_2$). Then,
	\label{lem:upperbound}
	 $l(\sfS_1 \cup \sfS_2 ) - l(\sfS_1 )\leq \frac{\cP(\nabla l(\sfS_1 ))}{2m_\omega} $. 
\end{lemma}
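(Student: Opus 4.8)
\textbf{Proof proposal for Lemma~\ref{lem:upperbound}.}

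The plan is to bound the gap $l(\sfS_1 \cup \sfS_2) - l(\sfS_1)$ by relating it to the behavior of the translated objective under the best possible move within $\mathrm{span}(\sfS_1 \cup \sfS_2)$. First I would recall that, by Lemma~\ref{lem:weightsAreOptimal}, $l(\sfS_1) = h(\sfS_1, \bw)$ where $\bw$ is the optimal weight vector, so $Z(\sfS_1) = \cP_1(\mu_p)$ is itself the projection of $\mu_p$ onto $\mathrm{span}(\sfS_1)$, and similarly $Z(\sfS_1 \cup \sfS_2) = \cP(\mu_p)$ is the projection onto the larger span. Since $l(\sfS) = \|\mu_p\|_k^2 - g(\sfS)$ and $g$ is the squared residual norm, maximizing $l$ over weights is the same as minimizing the squared distance to $\mu_p$; thus $l(\sfS_1 \cup \sfS_2) - l(\sfS_1) = g(\sfS_1) - g(\sfS_1 \cup \sfS_2) = \|\mu_p - Z(\sfS_1)\|_k^2 - \|\mu_p - Z(\sfS_1 \cup \sfS_2)\|_k^2$, which by orthogonality of nested projections equals $\|Z(\sfS_1 \cup \sfS_2) - Z(\sfS_1)\|_k^2$.

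Next I would invoke restricted strong convexity (Assumption 2) as stated in the displayed inequality preceding the proof of Theorem~\ref{thm:WKHLinear}: for any two iterates $\sfS_1, \sfS_2$ in the relevant restricted set,
\[
l(\sfS_1 \cup \sfS_2) - l(\sfS_1) - \langle \nabla l(\sfS_1), Z(\sfS_1 \cup \sfS_2) - Z(\sfS_1)\rangle_k \geq -\frac{M_\Omega}{2}\|Z(\sfS_1 \cup \sfS_2) - Z(\sfS_1)\|_k^2.
\]
Actually the cleaner route is the strong-convexity side: rearranging $l(\sfS_1) - l(\sfS_1 \cup \sfS_2) - \langle \nabla l(\sfS_1 \cup \sfS_2), Z(\sfS_1) - Z(\sfS_1 \cup \sfS_2)\rangle_k \geq \frac{m_\omega}{2}\|Z(\sfS_1) - Z(\sfS_1 \cup \sfS_2)\|_k^2$ and then using that at the constrained optimum $Z(\sfS_1 \cup \sfS_2)$ the gradient $\nabla l$ is orthogonal to $\mathrm{span}(\sfS_1 \cup \sfS_2)$ (first-order optimality / Lemma~\ref{lem:weightsAreOptimal}), so the inner product term vanishes: $l(\sfS_1 \cup \sfS_2) - l(\sfS_1) \geq \frac{m_\omega}{2}\|Z(\sfS_1 \cup \sfS_2) - Z(\sfS_1)\|_k^2$. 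Combining with the identity from the previous paragraph gives $\|Z(\sfS_1 \cup \sfS_2) - Z(\sfS_1)\|_k^2 \geq \frac{m_\omega}{2}\|Z(\sfS_1 \cup \sfS_2) - Z(\sfS_1)\|_k^2$ — which only bounds things when $m_\omega \le 2$; so the stated bound must instead come from comparing the \emph{actual} decrease against the \emph{ideal first-order} decrease, with $\cP(\nabla l(\sfS_1))$ playing the role of the projected gradient magnitude.

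So the operative argument: maximize the quadratic lower model $l(\sfS_1) + \langle \nabla l(\sfS_1), v\rangle_k - \frac{M_\Omega}{2}\|v\|_k^2$ over $v \in \mathrm{span}(\sfS_1 \cup \sfS_2)$; the maximizer is $v^\star = \cP(\nabla l(\sfS_1))/M_\Omega$ with value increment $\|\cP(\nabla l(\sfS_1))\|_k^2/(2M_\Omega)$, and since $Z(\sfS_1 \cup \sfS_2)$ is the true maximizer of $l$ over that span (hence does at least as well as any such $v$ would under smoothness), we get $l(\sfS_1 \cup \sfS_2) - l(\sfS_1) \geq \|\cP(\nabla l(\sfS_1))\|_k^2/(2M_\Omega)$ — but that is a lower bound, whereas the lemma claims an \emph{upper} bound, so strong convexity must be used: $l(\sfS_1 \cup \sfS_2) - l(\sfS_1) \leq \langle \cP(\nabla l(\sfS_1)), Z(\sfS_1 \cup \sfS_2)-Z(\sfS_1)\rangle_k - \frac{m_\omega}{2}\|\cdot\|^2_k \leq \frac{\|\cP(\nabla l(\sfS_1))\|_k^2}{2m_\omega}$ by maximizing the concave quadratic in the displacement. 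The main obstacle I anticipate is carefully justifying that all the gradient inner products can be replaced by their projections onto $\mathrm{span}(\sfS_1 \cup \sfS_2)$ — this rests on $Z(\sfS_1 \cup \sfS_2) - Z(\sfS_1)$ lying in that span together with the self-adjointness of $\cP$ — and interpreting the notation in the lemma statement (where the RHS appears to mean $\|\cP(\nabla l(\sfS_1))\|_k^2/(2m_\omega)$) consistently. I would finish by noting the bound is used in Theorem~\ref{thm:WKHLinear} at inequality~\eqref{eq:lowerboundstep3} in exactly this projected-gradient form.
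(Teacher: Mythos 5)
Your final paragraph is exactly the paper's argument: apply the restricted-strong-concavity upper bound $l(\sfS_1\cup\sfS_2)-l(\sfS_1)\leq \langle\nabla l(\sfS_1),Z(\sfS_1\cup\sfS_2)-Z(\sfS_1)\rangle_k-\tfrac{m_\omega}{2}\|Z(\sfS_1\cup\sfS_2)-Z(\sfS_1)\|_k^2$, replace the gradient by $\cP(\nabla l(\sfS_1))$ since the displacement lies in $\mathrm{span}(\sfS_1\cup\sfS_2)$, and maximize the resulting concave quadratic to obtain $\|\cP(\nabla l(\sfS_1))\|_k^2/(2m_\omega)$. The earlier detours (the Pythagorean identity, the smoothness-side lower bound) are correct but unnecessary, and you are right that the lemma's right-hand side should be read as the squared norm of the projected gradient.
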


\begin{proof}
Observe that
	\begin{align*}
	0 &\leq l(\sfS_1 \cup \sfS_2 ) - l(\sfS_1 )\\ & \leq \langle \nabla l(\sfS_1), Z(\sfS_1 \cup \sfS_2) -Z(\sfS_1) \rangle_k \\ & \hspace{5mm} \,\,- \frac{m_\omega}{2}
	\|Z(\sfS_1 \cup \sfS_2)  - Z(\sfS_1) \|_k^2 \\
	& \leq \argmax_{X \in \text{span}(\sfS_1 \cup \sfS_2)} \langle\nabla
		l(\sfS_1 ),X - Z(\sfS_1) \rangle_k
	- \frac{m_\omega}{2} \| X - Z(\sfS_1) \|_k^2\\
	& = \argmax_{X} \langle
	\cP(\nabla l(\sfS_1 )),X - Z(\sfS_1) \rangle_k
	- \frac{m_\omega}{2} \| X - Z(\sfS_1) \|_k^2.	
	\end{align*}
   Solving the argmax problem on the RHS for $X$, we get the required result.

\end{proof}

\subsection{Proof of Theorem~\ref{THM:DISTRIBUTED}}
\label{sec:appendixProofDistributed}
We next present some notation and few lemmas that lead up to the main result of this section (Theorem~\ref{THM:DISTRIBUTED}). The domain of candidate atoms $\cX$ is split into $\{\cX_j, j \in [s]\}$ over $s$ machines, where machine $j$ runs WKH on $\cX_j$. Let $\sfG_j$ be the $k$-sized solution returned by running Algorithm~\ref{algo:wkh} on $\cX_j$, i.e., $\sfG_j = \text{WKH}(\cX_j, k)$. Note that each $\cX_j$ induces a partition onto the optimal $r$-sized solution $\sfS_r^\star$ as follows ($r=1$ for this theorem): 
	\begin{eqnarray*}
		\sfT_j := \{ x \in \sfS^\star_1: x \notin \wkh(\cX_j \cup x, k)\},\\
		\sfT^c_j := \{ x \in \sfS^\star_1: x \in \wkh(\cX_j \cup x, k)\}.
	\end{eqnarray*} 
In other words, $\sfT_j = \sfS^\star_1$ if the $j^\text{th}$ machine running WKH on $\cX_j \cup \sfS^\star_1$ will not select it as among its output, and it is empty otherwise, since $\sfS_1^\star$ is a singleton. We re-use the definition of $l(\cdot)$ used in Appendix~\ref{sec:appendixWKHLinearProof}.

Before moving to the proof of the main theorem, we prove two prerequisites. Recall $\sfG_j$ is the set of iterates selected by machine $j$. In this mini-result, we lower bound the expected improvement in the loss at the aggregator machine.

\begin{lemma}
	\label{lem:aggregatorGood}
	For the aggregator machine that runs $\wkh$ over $\cup_j \sfG_j$ (step 6 of Algorithm~\ref{algo:distributed}), we have, at selection of next sample point $\bx_i$ after having selected $\sfS$, $\exists$ machine $j$ such that 
	\[ 
	\bbE [l( \sfS \cup \{\bx_i\}   ) - l (\sfS)] \geq \frac{m_\omega}{ M_\Omega}  \bbE    \left( l(\sfT^c_j) - l(\sfS) \right)  .  
	\]
\end{lemma}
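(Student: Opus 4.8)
The plan is to mirror the argument of Lemma~\ref{lem:individualgood}, but now working on the aggregator machine whose search domain is the finite set $\cY = \cup_j \sfG_j$ rather than a piece $\cX_j$ of the original domain. As in the proof of Theorem~\ref{thm:WKHLinear}, let $\sfS$ be the $(i-1)$ iterates already picked by the aggregator and let $\bx_i$ be the next sample selected by~\eqref{eq:LMO} restricted to $\cY$. Repeating the chain of inequalities leading to~\eqref{eq:lowerboundStep1}, with the linear-minimization oracle now ranging over $\cY$, we get
\[
l(\sfS \cup \{\bx_i\}) - l(\sfS) \;\geq\; \frac{1}{2 M_\Omega}\,\gamma_\sfS \;\geq\; \frac{1}{2 M_\Omega}\sum_{\bx \in \sfA}\langle \phi(\bx), \nabla l(\sfS)\rangle_k^2
\]
for \emph{any} subset $\sfA \subseteq \cY$ of atoms, since $\gamma_\sfS$ is the squared inner product maximized over $\cY$ and hence dominates the sum over any fixed collection of $\cY$-atoms (after orthogonalizing against $\sfS$, exactly as in the single-machine proof). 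The key point is therefore to exhibit a good choice of $\sfA \subseteq \cY$.

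The choice will be $\sfT^c_j$ for a suitable machine $j$. Recall $\sfT^c_j = \{x \in \sfS^\star_1 : x \in \wkh(\cX_j \cup x, k)\}$; by definition, whenever $\sfS^\star_1 \subseteq \cX_j$ and machine $j$ \emph{does} select the optimal atom, that atom lies in $\sfG_j \subseteq \cY$, so $\sfT^c_j \subseteq \cY$ and the bound above applies with $\sfA = \sfT^c_j$. Then, exactly as in the last two steps of Lemma~\ref{lem:individualgood} (using Lemma~\ref{lem:upperbound} and monotonicity of $l(\cdot)$),
\[
l(\sfS \cup \{\bx_i\}) - l(\sfS) \;\geq\; \frac{m_\omega}{M_\Omega}\bigl(l(\sfS \cup \sfT^c_j) - l(\sfS)\bigr) \;\geq\; \frac{m_\omega}{M_\Omega}\bigl(l(\sfT^c_j) - l(\sfS)\bigr).
\]
Since which machine receives the optimal atom $\sfS^\star_1$ is determined by the random partition, $\sfT^c_j$ is a random set; taking expectations over the partitioning and noting that there is always \emph{some} machine $j$ for which the event $\{\sfS^\star_1 \subseteq \cX_j\}$ occurs (so the relevant $\sfT^c_j$ is nonempty with the appropriate probability), we obtain
\[
\bbE\bigl[l(\sfS \cup \{\bx_i\}) - l(\sfS)\bigr] \;\geq\; \frac{m_\omega}{M_\Omega}\,\bbE\bigl(l(\sfT^c_j) - l(\sfS)\bigr),
\]
which is the claimed inequality. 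The deterministic per-sample bound holds for each realization; the expectation is taken only to make the statement clean across the random split, and one picks the $j$ achieving the largest $\bbE\, l(\sfT^c_j)$.

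The main obstacle I anticipate is bookkeeping around the partition-induced randomness: one must be careful that the atom $\bx_i$ selected by the aggregator is chosen \emph{after} the $\sfG_j$'s are formed, so $\sfT^c_j$ and $\cY$ are both functions of the same random partition, and the inner-product domination step must be justified conditionally on that partition before taking expectations. One also has to confirm that the Gram–Schmidt orthogonalization of $\sfT^c_j$ against $\sfS$ does not leave the span of $\cY$ — this is immediate since $\sfT^c_j \subseteq \sfG_j \subseteq \cY$ and $\sfS \subseteq \cY$ for the aggregator run, so the orthogonalized atoms still lie in $\mathrm{span}(\cY)$ and remain valid competitors for the LMO~\eqref{eq:LMO}. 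Everything else is a direct transcription of the Theorem~\ref{thm:WKHLinear} argument with $r = 1$.
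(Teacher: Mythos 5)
There is a genuine gap, and it sits exactly where you flagged your ``main obstacle'': the partition-induced randomness. Your argument rests on the claim that $\sfT^c_j \subseteq \cY$ for a suitable $j$, so that the aggregator's LMO over $\cY$ dominates the sum over $\sfT^c_j$. But $\sfT^c_j$ is defined by the \emph{counterfactual} event $x \in \wkh(\cX_j \cup x, k)$ --- ``machine $j$ would select $x$ if it had it'' --- not by $x$ actually landing in $\cX_j$ and being selected. So $\sfT^c_j$ can be nonempty on realizations where the optimal atom was routed to a different machine $j'$ that failed to select it; on those realizations the atom is absent from $\cY = \cup_j \sfG_j$, the LMO-domination step fails, and your claimed pointwise inequality $l(\sfS \cup \{\bx_i\}) - l(\sfS) \geq \frac{m_\omega}{M_\Omega}(l(\sfT^c_j) - l(\sfS))$ is simply false. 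Since the ``suitable $j$'' (the machine that both receives and would select the atom) is itself a function of the random split, you cannot fix one deterministic index, assert the bound pointwise, and then ``take expectations''; and your final prescription to pick the $j$ with the \emph{largest} $\bbE\, l(\sfT^c_j)$ has the quantifier backwards --- the existential claim is easiest for the smallest such expectation, which is what one must actually establish.

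The paper's proof supplies precisely the missing probabilistic accounting. It lower-bounds $\bbE[\gamma_\sfS]$ by weighting the contribution of each $\bx \in \sfS^\star_1$ with $\bbP(\bx \in \cup_j \sfG_j)$ --- i.e., it only credits the optimal atom when it actually survives into the aggregator's pool --- and then uses Lemma~\ref{lem:probabilityOfPartitoning} to rewrite that probability as $\frac{1}{s}\sum_{b}\bbP(\bx \in \sfT^c_b)$, turning the bound into an average over the $s$ machines of $\bbE(l(\sfT^c_b) - l(\sfS))$; the average dominates the minimum over $b$, which yields the asserted $j$. Your deterministic shortcut skips this step entirely, and it is not a cosmetic omission: without it the lemma only holds in the special regime (used in the second case of the proof of Theorem~\ref{thm:Distributed}) where $\sfT^c_j = \sfS^\star_1$ for \emph{every} $j$, so that whichever machine receives the atom is guaranteed to forward it to $\cY$. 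To repair your argument you would either have to restrict the lemma to that regime or reinstate the $\bbP(\bx \in \cup_j \sfG_j)$ factor and the averaging over machines as the paper does.
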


\begin{proof}
	The expectation is over the random split of $\cX$ into $\cX_j $ for $j \in [s]$. We denote $\sfT^c_j$ to be the complement of $\sfT_j$.
	Then, we have that
\begin{equation*}
\begin{split}
& \hspace{-5mm} \bbE [l( \sfS \cup \{\bx_i\}   ) - l (\sfS)]  \\& \geq \bbE [\frac{1}{2 M_\Omega} \gamma_\sfS ] \\&
 \geq  \frac{1}{2 M_\Omega} \sum_{\bx \in \sfS^\star_1} \bbP(\bx \in \cup_j \sfG_j) \bbE  \langle\phi(\bx), \nabla l(\sfS) \rangle_k^2\\&
 =  \frac{1}{2 s M_\Omega} \sum_{\bx \in \sfS^\star_1}  \left[ \sum_{b=1}^s \bbP(\bx \in \sfT^c_b) \right] \bbE  \langle\phi(\bx), \nabla l(\sfS) \rangle_k^2\\&
  =  \frac{1}{2 s M_\Omega}  \sum_{b=1}^s \sum_{\bx \in \sfT_b^c} \bbE  \langle\phi(\bx), \nabla l(\sfS) \rangle_k^2\\&
 \geq  \frac{m_\omega}{ s M_\Omega}  \sum_{b=1}^s  \bbE   \left( l( \sfS \cup \sfT^c_b) - l(\sfS) \right)\\&
 \geq  \frac{m_\omega}{ s M_\Omega}  \sum_{b=1}^s  \bbE   \left( l(\sfT^c_b) - l(\sfS) \right) \\&
 \geq \frac{m_\omega}{ M_\Omega} \min_{b \in [s]}  \bbE   \left( l(\sfT^c_b) - l(\sfS) \right)  .
 \end{split} 
\end{equation*}	 
The equality in step 3 above is because of Lemma~\ref{lem:probabilityOfPartitoning}.
\end{proof}

In the following lemma, we lower bound the greedy improvement in the loss on each machine.

\begin{lemma}
	\label{lem:individualgood} For any individual worker machine $j$ running local $\wkh$, if $\sfS$ is the set of $(i-1)$ iterates already chosen, then at the selection of next sample point $\bx_i$,
	 $ l( \sfS \cup \{\bx_i\}   )  \geq   \left( l( \sfT_j) - l(\sfS) \right)$.
\end{lemma}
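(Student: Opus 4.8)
\emph{Proof proposal.} The plan is to replay, on the restricted domain $\cX_j$, the per–iteration progress argument already used in the proof of Theorem~\ref{thm:WKHLinear} — the chain~\eqref{eq:lowerboundStep1}--\eqref{eq:lowerboundstep3} — reading $\sfT_j$ in place of $\sfS^\star_r$. As in Section~\ref{sec:appendixWKHLinearProof} I work with the translated objective $l(\sfS)=\|\mu_p\|_k^2-g(\sfS)=\bz^\top\bK^{-1}\bz$; by Lemma~\ref{lem:weightsAreOptimal} this is the value of $h(\sfS,\cdot)$ at the optimal weights for $\sfS$, it is nonnegative (since $0\le g(\sfS)\le g(\emptyset)=\|\mu_p\|_k^2$), and it is monotone nondecreasing in $\sfS$. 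Because $\sfS^\star_1$ is a singleton, $\sfT_j$ is either empty or equal to $\sfS^\star_1$, and in the latter case $\sfS^\star_1\subseteq\cX_j$; I split on these two cases.

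If $\sfT_j=\emptyset$, then $l(\sfT_j)=l(\emptyset)=0$, so $l(\sfT_j)-l(\sfS)=-l(\sfS)\le0\le l(\sfS\cup\{\bx_i\})$ by nonnegativity of $l$, and there is nothing to prove. So assume $\sfT_j=\sfS^\star_1\subseteq\cX_j$. Let $\gamma_\sfS$ be the optimal value of~\eqref{eq:LMO} \emph{restricted to $\cX_j$}, attained at $\bx_i$. Choosing the trial weights that keep the previous weights on $\sfS$ and place a free scalar on $\bx_i$, the $M_\Omega$–smoothness half of Assumption~2 gives, exactly as in~\eqref{eq:lowerboundStep1}, $l(\sfS\cup\{\bx_i\})-l(\sfS)\ge\tfrac{1}{2M_\Omega}\gamma_\sfS$. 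The key structural point is that, since $\sfT_j\subseteq\cX_j$, the atom of $\sfT_j$ is itself a feasible candidate for the restricted oracle on machine $j$, so the ``$\max\ge$ average'' estimate survives the restriction; using Lemma~\ref{lem:weightsAreOptimal} (so that $\nabla l(\sfS)\perp\mathrm{span}(\sfS)$) and writing $\sfT_{j,\perp}$ for the Gram--Schmidt orthogonalization of $\sfT_j$ against $\sfS$, one gets $\gamma_\sfS\ge\tfrac{1}{|\sfT_j|}\sum_{\bx\in\sfT_{j,\perp}}\langle\phi(\bx),\nabla l(\sfS)\rangle_k^2$. Feeding this into Lemma~\ref{lem:upperbound} (with $\sfS_1=\sfS$, $\sfS_2=\sfT_j$), then monotonicity of $l$, and finally realizability with $r=1$ (so $l(\sfT_j)=l(\sfS^\star_1)=\|\mu_p\|_k^2$) reproduces~\eqref{eq:lowerboundstep3} and yields the contraction $l(\sfS\cup\{\bx_i\})-l(\sfS)\ge\tfrac{m_\omega}{|\sfT_j|M_\Omega}\bigl(l(\sfT_j)-l(\sfS)\bigr)$. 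Since $|\sfT_j|\le r=1$ and $l(\sfS)\ge0$, the asserted inequality follows (and, more to the point, this contraction is exactly what drives the linear rate of the ``owning'' machine in Theorem~\ref{thm:Distributed}).

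The only real subtlety, and the step I expect to have to argue carefully, is that replacing the global oracle~\eqref{eq:LMO} by its restriction to $\cX_j$ does not weaken the estimate $\gamma_\sfS\ge\tfrac{1}{|\sfT_j|}\sum_{\bx\in\sfT_{j,\perp}}\langle\phi(\bx),\nabla l(\sfS)\rangle_k^2$: this is precisely where the definition of $\sfT_j$ (atoms of $\sfS^\star_1$ that \emph{do} lie in $\cX_j$) enters, and the complementary case $\sfT_j=\emptyset$ is exactly the situation where $\sfS^\star_1$ is invisible to machine $j$ and the lemma is, correctly, vacuous. Everything else — the choice of the free scalar, the orthogonalization, and the invocations of Lemmas~\ref{lem:weightsAreOptimal} and~\ref{lem:upperbound} together with Assumptions~2 and~3 — is verbatim from the single–machine proof of Theorem~\ref{thm:WKHLinear}.
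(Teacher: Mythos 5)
Your inequality chain is the paper's own: smoothness of $l$ with the trial weight $\alpha=\gamma_\sfS/M_\Omega$ gives $l(\sfS\cup\{\bx_i\})-l(\sfS)\ge\tfrac{1}{2M_\Omega}\gamma_\sfS$, the restricted LMO value is lower-bounded by the squared correlation of $\nabla l(\sfS)$ with the (orthogonalized) atom of $\sfT_j$, and Lemma~\ref{lem:upperbound} plus monotonicity give $l(\sfS\cup\{\bx_i\})-l(\sfS)\ge\tfrac{m_\omega}{M_\Omega}\bigl(l(\sfT_j)-l(\sfS)\bigr)$ --- which is exactly what the paper's proof establishes (the lemma as displayed, missing the factor $\tfrac{m_\omega}{M_\Omega}$ and the $-l(\sfS)$ on the left, is evidently a typo; your contraction is the form Theorem~\ref{thm:Distributed} actually consumes). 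The one substantive divergence is your justification of the middle step. You argue $\gamma_\sfS\ge\langle\phi(\bx^\star),\nabla l(\sfS)\rangle_k^2$ for $\bx^\star\in\sfT_j$ because ``$\sfT_j\subseteq\cX_j$ makes $\bx^\star$ feasible for the restricted oracle.'' But the paper's displayed definition is $\sfT_j=\{x\in\sfS^\star_1: x\notin\wkh(\cX_j\cup x,k)\}$: membership in $\sfT_j$ is defined by \emph{non-selection when offered}, not by membership in $\cX_j$. Under that definition the step is justified by the standard distributed-greedy argument: since the run of $\wkh$ on $\cX_j\cup\{\bx^\star\}$ never picks $\bx^\star$, it coincides with the run on $\cX_j$, and at step $i$ it prefers $\bx_i$ to $\bx^\star$, which is precisely the claimed domination of the LMO value. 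Relatedly, your reading of $\sfT_j=\emptyset$ as ``$\sfS^\star_1$ is invisible to machine $j$'' is not the intended one: $\sfT_j=\emptyset$ means machine $j$ \emph{would} select the optimal atom if offered it ($\sfT_j^c=\sfS^\star_1$), which is exactly the case handed off to the aggregator via Lemma~\ref{lem:aggregatorGood}. This does not invalidate your chain for the lemma itself (and the $\sfT_j=\emptyset$ case is trivial either way), but the ownership-based reading would collapse the case analysis of Theorem~\ref{thm:Distributed} to ``whichever machine contains $\bx^\star$ wins,'' which is not the dichotomy the paper's argument relies on; the selection/non-selection mechanism is the part you should make explicit.
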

\begin{proof}
We proceed as in proof of Theorem~\ref{THM:WKHLINEAR} in Appendix~\ref{sec:appendixWKHLinearProof}. From~\eqref{eq:lowerboundStep1}, we have,

\begin{align*}
l( \sfS \cup \{\bx\}   ) - l (\sfS) & \geq  \frac{1}{2 M_\Omega} \gamma_\sfS \\
& \geq  \frac{1}{2 M_\Omega} \sum_{\bx_j \in \sfT_j} \langle\phi(\bx_j), \nabla l(\sfS) \rangle_k^2 \\
& \geq \frac{ m_\omega}{  M_\Omega} \left( l( \sfS \cup \sfT_j) - l(\sfS) \right)\\
& \geq \frac{ m_\omega}{  M_\Omega} \left( l( \sfT_j) - l(\sfS) \right).
\end{align*}
\end{proof}

We are now ready to prove Theorem~\ref{THM:DISTRIBUTED}. 
\begin{proof}[Proof of Theorem~\ref{THM:DISTRIBUTED}]
	If, for a random split of $\cX$, for any $j \in [s]$, $\sfT_j = \sfS_1^\star$, then the given rate follows from Lemma~\ref{lem:individualgood}, after following the straightforward steps covered in proof of Theorem~\ref{THM:WKHLINEAR} for proving the rate from the given condition on~$l(\cdot)$. 
	On the other hand, if none of  $j \in [s]$, $\sfT_j = \sfS_1^\star$, then $\forall j \in [s], \sfT_j = \emptyset \implies \sfT_j^c = \sfS_1^\star$. In this case, the given rate follows from Lemma~\ref{lem:aggregatorGood}.
\end{proof}

	Finally, here is the statement and proof of an auxiliary lemma that was used above.
	
	\begin{lemma}
		\label{lem:probabilityOfPartitoning}
		For any $x \in  \cX,  \bbP (x \in \cup_j  \sfG_j) = \frac{1}{s} \sum_j \bbP(x \in \sfT_j^c)$.
	\end{lemma}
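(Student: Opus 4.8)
\textbf{Proof proposal for Lemma~\ref{lem:probabilityOfPartitoning}.}

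The plan is to unwind the randomness in the partitioning step of Algorithm~\ref{algo:distributed} and argue that membership of a fixed atom $x$ in the union $\cup_j \sfG_j$ decomposes cleanly over which machine $x$ lands on. The only source of randomness relevant to the statement is the uniform-at-random assignment of each point of the ambient set $\sfA = \cX$ to one of the $s$ machines (I will write $s$ throughout, matching the rest of Section~\ref{sec:Distributed}, and read the $l$ in the lemma statement as this same $s$). Condition on the event $E_j$ that $x$ is assigned to machine $j$; these events are disjoint and exhaustive, and by the uniform random split $\bbP(E_j) = \tfrac{1}{s}$ for every $j$. First I would write, by the law of total probability,
\[
\bbP\bigl(x \in \textstyle\cup_j \sfG_j\bigr) \;=\; \sum_{j} \bbP\bigl(x \in \textstyle\cup_i \sfG_i \,\big|\, E_j\bigr)\,\bbP(E_j) \;=\; \frac{1}{s}\sum_{j} \bbP\bigl(x \in \textstyle\cup_i \sfG_i \,\big|\, E_j\bigr).
\]

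Next I would show that, conditioned on $E_j$, the event $\{x \in \cup_i \sfG_i\}$ coincides with $\{x \in \sfG_j\}$. Indeed, if $x$ is assigned to machine $j$ then $x \in \cX_j$ and $x \notin \cX_i$ for $i \neq j$ because the parts of the partition are disjoint ($\cX_i \cap \cX_j = \emptyset$); since machine $i$ only ever selects atoms from its own part $\cX_i$, we have $x \notin \sfG_i$ for all $i \neq j$, and hence $x \in \cup_i \sfG_i$ if and only if $x \in \sfG_j$. One subtlety to be careful about: the aggregator set $\sfG_{s+1}$ is built from $\cY = \cup_i \sfG_i$, so its atoms are already contained in $\cup_i \sfG_i$ and do not enlarge the union — the lemma as used in Lemma~\ref{lem:aggregatorGood} concerns only the worker outputs $\sfG_1,\dots,\sfG_s$, so $\cup_j \sfG_j$ there ranges over $j \in [s]$ and the identification above is exact. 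Substituting $\bbP(x \in \cup_i \sfG_i \mid E_j) = \bbP(x \in \sfG_j \mid E_j)$ and observing that $\{x \in \sfG_j\}$ already entails $E_j$ (a machine can only output points it was given), so $\bbP(x \in \sfG_j \mid E_j) = \bbP(x \in \sfG_j)/\bbP(E_j) $; but it is cleaner to note directly that $\bbP(x\in\sfG_j) = \bbP(x\in\sfG_j, E_j) = \bbP(x\in\sfG_j\mid E_j)\bbP(E_j)$, hence $\bbP(x \in \cup_i \sfG_i \mid E_j)\bbP(E_j) = \bbP(x\in\sfG_j)$, and summing over $j$ yields exactly $\bbP(x \in \cup_j \sfG_j) = \frac{1}{s}\sum_j \bbP(x\in\sfG_j) \cdot s \cdot \frac{1}{s}$ — wait, I would instead simply write $\bbP(x \in \cup_i\sfG_i) = \sum_j \bbP(x\in\sfG_j) = \sum_j \bbP(x\in\sfS_j)$ using the notation $\sfS_j = \sfG_j$ of the lemma, and then re-expand each $\bbP(x\in\sfS_j)$ via $E_j$ to land on the stated $\frac{1}{s}\sum_j \bbP(x \in \sfS_j)$ form if the conditional phrasing is desired.

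I do not expect a genuine obstacle here; the content is elementary once the disjointness of the partition and the locality of each worker's search are made explicit. The only place demanding care is bookkeeping: making sure the index set of the union is $[s]$ (workers only) rather than $[s+1]$, and being consistent about whether $\bbP(x\in\sfS_j)$ already incorporates the conditioning on $x$ landing on machine $j$ or whether an extra factor $\tfrac1s$ is needed — the correct reading, matching how the lemma is invoked inside Lemma~\ref{lem:aggregatorGood} (where the step-3 equality pulls out $\tfrac{1}{s}\sum_b \bbP(x\in\sfT_b^c)$), is that $\bbP(x\in\sfS_j)$ there denotes the probability computed on machine $j$'s run conditioned on $x \in \cX_j$, so that $\bbP(x\in\cup_j\sfG_j) = \tfrac1s\sum_j\bbP(x\in\sfS_j)$ exactly as stated. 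I would state that reading explicitly before giving the two-line computation above.
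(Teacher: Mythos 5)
Your proposal is correct and follows essentially the same route as the paper: decompose over the disjoint assignment events $\{x\in\cX_j\}$ (each of probability $\nicefrac{1}{s}$), use the locality of each worker's search to reduce the union to a single term, and read $\bbP(x\in\sfS_j)$ as the conditional selection probability given $x\in\cX_j$, exactly as the paper's chain of equalities does. Your explicit remarks on the index set being $[s]$ (workers only) and on which probability is conditional are worthwhile clarifications of the paper's admittedly loose notation, but they do not change the argument.
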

	\begin{proof}
	We have
	
	\begin{align*}
	& \hspace{-5mm} \bbP (x \in \cup_j  \sfG_j) \\
	&=  \sum_j \bbP ( x \in \cX_j \cap x \in \wkh(\cX_j,k)) \\
	&= \sum_j \bbP(x \in \cX_j) \bbP(x \in \wkh(\cX_j,k) | x \in \cX_j)\\
	&= \sum_j \bbP(x \in \cX_j) \bbP(x \in \sfT^c_j)\\
	& = \frac{1}{s}\bbP(x \in \sfT^c_j).
	\end{align*}
	
\end{proof}

\end{document}